
\documentclass{article}

\usepackage{microtype}
\usepackage{graphicx}
\usepackage{booktabs} 

\usepackage{hyperref}



\usepackage[accepted]{icml2019}

\icmltitlerunning{Partially Linear Additive Gaussian Graphical Models}
\usepackage{enumitem}

\usepackage{amsmath,amsthm,amssymb}
\allowdisplaybreaks
\usepackage{caption}
\usepackage{subcaption}

\usepackage{bm}

\usepackage{capt-of}

\usepackage{hyperref}
\usepackage{natbib}


\newcommand{\grad}{\bm{\nabla}}

\newcommand{\norm}[1]{\lVert #1 \rVert}
\newcommand{\Norm}[1]{\left\lVert #1 \right\rVert}
\usepackage{mathtools}
\DeclarePairedDelimiter\abs{\lvert}{\rvert}%
\makeatletter
\let\oldabs\abs
\def\abs{\@ifstar{\oldabs}{\oldabs*}}

\makeatother

\newcommand{\bTheta}{\bm{\Theta}}
\newcommand{\bx}{\mathbf{x}}

\newcommand{\bz}{\mathbf{z}}
\newcommand{\by}{\mathbf{y}}

\newcommand{\bX}{\mathbf{X}}
\newcommand{\bOmega}{\mathbf{\Omega}}
\newcommand{\bzero}{\mathbf{0}}
\newcommand{\bZ}{\mathbf{Z}}
\newcommand{\bM}{\mathbf{M}}
\newcommand{\bH}{\mathbf{H}}
\newcommand{\bD}{\mathbf{D}}
\newcommand{\bR}{\mathbf{R}}
\newcommand{\bB}{\mathbf{B}}

\newcommand{\bb}{\mathbf{b}}
\newcommand{\bW}{\mathbf{W}}

\newcommand{\bS}{\mathbf{S}}
\newcommand{\bO}{\mathbf{O}}
\newcommand{\bI}{\mathbf{I}}
\newcommand{\bbeta}{\bm{\beta}}
\newcommand{\balpha}{\bm{\alpha}}
\newcommand{\bY}{\mathbf{Y}}
\newcommand{\bSigma}{\mathbf{\Sigma}}

\newcommand{\bDelta}{\bm{\Delta}}

\newcommand{\bepsilon}{\bm{\epsilon}}

\newcommand{\EE}{\mathbb{E}} 

\newcommand\given[1][]{\:#1\vert\:}
\DeclareMathOperator*{\argmin}{arg\,min}

\usepackage{physics}

\usepackage{dsfont}


\newtheorem{lemma}{Lemma}
\newtheorem{theorem}{Theorem}

\newtheorem{definition}{Definition}

\newtheorem{assumption}{Assumption}


\usepackage{graphicx}

\usepackage{multirow}
\usepackage{float}


\usepackage{comment}
\newcommand{\sinong}[1]{\textcolor{blue}{[Sinong: #1]}}
\newcommand{\sanmi}[1]{\textcolor{red}{[Sanmi: #1]}}

\begin{document}
	
	\twocolumn[
	\icmltitle{Partially Linear Additive Gaussian Graphical Models}
	
	
	
	\icmlsetsymbol{equal}{*}
	
	\begin{icmlauthorlist}
		\icmlauthor{Sinong Geng}{pu}
		\icmlauthor{Minhao Yan}{cu}
		\icmlauthor{Mladen Kolar}{uc}
		\icmlauthor{Oluwasanmi Koyejo}{uiuc}
	\end{icmlauthorlist}
	
	\icmlaffiliation{pu}{Department of Computer Science, Princeton University, Princeton, New Jersey, USA}
	\icmlaffiliation{cu}{Charles H. Dyson School of Applied Economics and Management, Ithaca, New York, USA}
	\icmlaffiliation{uc}{                Booth School of Business, University of Chicago, Chicago, Illinois, USA}
	\icmlaffiliation{uiuc}{                Department of Computer Science, University of Illinois at Urbana-Champaign, University of Illinois Urbana-Champaign, Champaign, Illinois, USA}
	\icmlcorrespondingauthor{Sinong Geng}{sgeng@princeton.edu}

	\icmlkeywords{Machine Learning, ICML}
	
	\vskip 0.3in
	]
	
	
	
	\printAffiliationsAndNotice{}  

%

%

\begin{abstract}
We propose a partially linear additive Gaussian graphical model (PLA-GGM) for the estimation of
associations between random variables distorted by observed confounders.
Model parameters are estimated 
using an $L_1$-regularized maximal pseudo-profile likelihood estimator (MaPPLE) for which we
prove $\sqrt{n}$-sparsistency.
Importantly, our approach avoids parametric constraints on the effects of confounders on the estimated graphical model structure. 
Empirically, the PLA-GGM is applied to both synthetic and real-world datasets, demonstrating superior performance compared to competing methods.

\end{abstract}

\section{Introduction}
Undirected graphical models
are extensively used to study the conditional independence structure
between random variables~\citep{jordan1998learning, liu2013bayesian,
  liu2014learning}. Important applications include image
processing~\citep{mignotte2000sonar}, finance~\citep{Barber2015ROCKET}
and neuroscience~\citep{zhu2017graphical}, among others. A major
challenge in real world applications is that the underlying
conditional independence structure can be distorted by
confounders. Unfortunately, despite the large literature on graphical
model estimation, there is limited work to date on estimation with
observed confounding.
 %
%

The observed confounding issue is ubiquitous. Consider the problem of estimating brain functional connectivity from functional magnetic resonance imaging (fMRI)~\citep{biswal1995functional, fox2007spontaneous, shine2015estimation, shine2016temporal} data. Here, the connectivity estimate is known to be susceptible to confounding from physiological noise such as subject motion~\citep{van2012influence, goto2016head}. We emphasize that although the amount of motion is observed, the resulting confounding can significantly distort the connectivity matrix when estimated using conventional means, leading to incorrect scientific inferences~\citep{laumann2016stability}.
Another example is in social network analysis. The social contagion (the effects caused by the people close
to each other in a social network) are shown to be confounded by the effect of an individual's covariates 
on his or her behavior or other measurable responses~\citep{shalizi2011homophily}. As a result, a 
method which is able to recover the social contagion or the social network structure despite the individual confounding is clearly useful.   

This manuscript is motivated by the question: {\bf {\em is it possible to efficiently estimate sparse conditional independence structure between random variables with known confounders?}} We provide a positive answer for the important case of jointly Gaussian random variable models. Although prior works have studied the issue of \emph{hidden} confounders in generative undirected graphical models \citep{jordan1998learning}, to our knowledge, this manuscript is among the first to develop the methodology to deal with observed confounders. 
We propose a new class of graphical models: the partially linear additive Gaussian graphical models (PLA-GGMs), 
whose parameters capture the underlying relationships of random variables, 
and where these relationships take a partially linear additive form~\citep{hastie2017generalized}.
Further, we parametrize the model using two additive components: the \emph{target} i.e. the non-confounded structure, 
and the \emph{nuisance} structure induced by observed confounders.

Importantly, we do not impose a parametric form on the \emph{nuisance} structure -- only 
requiring smoothness to facilitate nonparametric estimation. This significantly 
improves on prior work which has required strong ad-hoc assumptions like the linear 
assumption~\citep{van2012influence, power2014methods} or the zero-expectation assumption~\citep{lee2015joint, geng2018joint} 
on the nuisance parameter. PLA-GGMs are applicable as long as not all the observed samples are highly 
confounded, so that the proposed procedure can compare the confounded samples with the non-confounded ones 
in order to remove the confounding influence. 

We propose a pseudo-profile likelihood (PPL) estimator for learning
PLA-GMMs, which can be considered as a pseudo-likelihood version
profile likelihood~\citep{fan2005profile}.  By minimizing the
$L_1$-regularized negative log PPL, we derive a $\sqrt{n}$-sparsistent
estimator of the \emph{target} structure under mild assumptions. The
sparsistency of the estimator indicates that the proposed method
recovers the true underlying structure with a high probability
\cite{wainwright2009sharp,kolar2009sparsistent,kolar2011time,ravikumar2010high}. We
also show that the convergence rate of the proposed estimator is
faster than competing methods.

The proposed PLA-GGM can be considered as a generative-model counterpart of partially linear additive discriminative models~\citep{fan2008statistical, cheng2014efficient, chouldechova2015generalized,lou2016sparse}. 
Compared with these discriminative models, PLA-GGM as a generative model focuses on estimating the 
relationships among random variables, thus can be used to recover the conditional independence structure,
which discriminative models like \citealt{sohn2012joint, wytock2013sparse} cannot. 
Recall that GGMs can be estimated as a collection of related regressions~\citep{meinshausen2006high}. Along similar linesm,
our PLA-GGM approach requires studying multiple dependent discriminative models simultaneously.


\paragraph{Main Contributions}
Our main technical contributions are summarized as follows:
  \begin{itemize}[leftmargin=*]
  	\vspace{-2mm}
\item To the best of our knowledge, PLA-GMM is the first model to specifically deal with the observed 
   confounders in generative undirected graphical models. Without assuming a parametric form for the 
   confounding, PLA-GMM can accommodate a broad class of potential structure confounders.   
\vspace{-2mm}
\item We demonstrate that PLA-GGMs facilitate $\sqrt{n}$-sparsistent estimators by proposing the PPL method as a new objective for the parameter estimation. Further, since the corresponding minimization problem is shown to be equivalent to a regularized weighted least square, the optimization is shown to be efficient by leveraging the coordinate descent method~\citep{glmnet} and the corresponding strong screening rule~\citep{tibshirani2012strong}.
\end{itemize}
We demonstrate the utility of PLA-GGMs using both synthetic data and  the 1000 Functional Connectomes Project Cobre dataset~\citep{COBRE13:online}, a brain imaging dataset from the Center for Biomedical Research Excellence. The proposed PLA-GGM demonstrates superior accuracy in terms of structure recovery and can effectively detect the abnormalities of the brain functional connectivity of subjects with schizophrenia.  

\section{Modeling}
\label{sec:model}

We begin by formulating PLA-GGMs. For a continuous random vector $\bZ$ and a confounder variable $G$, we assume that 
the conditional distribution $\bZ\given G=g$ follows a Gaussian graphical model~\citep{yang2015graphical} 
with a parameter matrix, denoted by $\bOmega(g)$, that depends on $g$. 
In particular, the conditional distribution of $\bZ\given G=g$ follows:
\begin{align*}
\begin{split}
\text{P}(\bZ = \bz;\bOmega(g)\given G=g)
&\propto\exp\left\{
\sum_{j=1}^p \Omega_{jj}(g) z_j
\right.
\\+&\left. \sum_{j=1}^p \sum_{j'>j}^p \Omega_{jj'}(g) z_jz_{j'}
-\frac{1}{2}\sum_{j}^p z_j^2
\right\},
\end{split}
\end{align*}
where we assume that the diagonal of the covariance matrix of $\bZ\given G=g$ is $1$, without loss of generality \citep{yang2015graphical}. Note that the parameter $\bOmega(g)$ captures the conditional independence structure\footnote{We will refer to {\em conditional independence structure} as {\em structure} for the ease of presentation.} of $\bZ\given G=g$. Therefore, 
the structure of $\bZ$ is allowed to vary based on the values of confounders. 
This characteristic makes the proposed PLA-GGM more general in scope and applicability 
compared to prior work where the structure of $\bZ$ is assumed to be \emph{unrelated} 
to the confounders (see discussion in Section~\ref{sec:competing-method}). 

Let $\bOmega_0 := \bOmega(0)$ represent the non-confounded structure. We assume that the parameter $\bOmega(g)$
takes the partially linear additive form:
\begin{equation}
\label{eq:model:precision}
\bOmega(g) := \bOmega_0 + \bR(g).
\end{equation}
Our goal is to recover $\bOmega_0$ given $n$ independent observations $\mathbb{Z} = \left\{ \bz_i, g_i \right\}_{i \in [n]}$ 
from the joint distribution of $(\bZ, G)$. The term $\bR(g)$ is a nuisance component that arises due to confounding. 
Thus, while the structure of $\bZ$ varies over observations, 
we are only interested in a specific one, $\bOmega_0$, whose sparsity pattern encodes the target non-confounded structure. 

It is clear that recovering $\bOmega_0$ is impossible without constraints on $\bR(\cdot)$. 
For instance, estimates $\left\{\hat{\bOmega}_0 := \bOmega_0/2,\,\hat{\bR}(\cdot) := \bR(\cdot)+\bOmega_0/2  \right\}$ 
and $\left\{\hat{\bOmega}_0 := \bOmega_0/3,\,\hat{\bR}(\cdot) := \bR(\cdot)+2\bOmega_0/3  \right\}$ 
result in the same likelihood, making it impossible to determine the true value of $\bOmega_0$. 
To this end, we enforce a mild assumption -- that the effect of confounders is trivial when the size of confounding itself is small. 
Specifically, for a known $g^*>0$, we assume 
\begin{equation}
\label{eq:asm-pla}
\bR(g) = \bm{0},
\end{equation}
for any $g$ satisfying $\abs{g} \leq g^*$. 

The assumption states that the confounders with values smaller than $g^*$ do not have any effect on the structure of $\bZ$, 
and thus this serves as a constraint on $\bR$. Then, as long as we can observe some samples with the confounders small enough (smaller than $g^*$), 
we should be
able to distinguish $\bOmega_0$ from $\bR(g)$. Such an assumption is much weaker than those used in existing works
including~\citet{van2012influence}, \citet{power2014methods}, and \citet{lee2015joint}, 
where $\bR(g) = \bm{0}$ or $\EE\left[\bR (g) \right]= \bm{0}$ are often assumed. Note that when $g^* = \infty$, \eqref{eq:asm-pla} will 
degenerate to $\bR(g) = 0$. Also, as $g$ is smooth, there should exist infinite $g^*$'s satisfying the definition. We do not require using the largest possible one. The selection of $g^*$ in practice is discussed in Section~\ref{sec:real-world}. 

\section{Pseudo-Profile Likelihood Method}
\label{sec:estimation}

PLA-GGMs facilitate fast-converging estimators.
In this section, we propose an estimation procedure for $\bOmega_0$ in PLA-GGMs. 
\subsection{Pseudo Likelihood}
\label{sec:pl}
For a PLA-GGM parameterized by $\left\{\bR(\cdot), \bOmega_0\right\}$ with observations $\left\{ \bz_i, g_i \right\}_{i \in [n]}$, we first derive the log pseudo likelihood as a linear regression in Lemma~\ref{lem:pl}.
\begin{lemma}
\label{lem:pl}
Define $\bz_{i,-j}$ as the vector $\bz_i$ with the $j^{\text{th}}$ component replaced by $1$, $\bOmega_{0\cdot j}$ the $j^{\text{th}}$ column vector of $\bOmega_{0}$, and  $\bOmega_{i\cdot j}$ the $j^{\text{th}}$ column vector of $\bOmega(g_i)$. The log pseudo likelihood of the PLA-GGM follows
\begin{align}
\label{eq:pl}
\begin{split}
\ell_{PL}& \left(\left\{ \bz_i, g_i \right\}_{i \in [n]};\bR(\cdot), \bOmega_0 \right)
\\:=& \sum_{i=1}^n \sum_{j = 1}^p \left\{ z_{ij} \left( \bz_{i,-j}^\top \bOmega_{0 \cdot j}
+ \bz_{i,-j}^\top \bOmega_{i \cdot j}
\right)
-\frac{1}{2}z_{ij}^2  \right.
\\&-\left.\frac{1}{2}\left( \bz_{i,-j}^\top \bOmega_{0 \cdot j}
+ \bz_{i,-j}^\top \bOmega_{i \cdot j}
\right)^2
\right\}.
\end{split}
\end{align}
\end{lemma}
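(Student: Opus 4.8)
The plan is to derive the pseudo likelihood directly from the node-conditional distributions, following the standard Besag pseudo-likelihood construction but adapted to the confounded model in \eqref{eq:model:precision}. First I would observe that for a Gaussian graphical model with parameter matrix $\bOmega(g_i)$, the conditional distribution of the $j^{\text{th}}$ coordinate $z_{ij}$ given all the others $\bz_{i,-j}$ (here using the $1$-augmentation convention so that the diagonal term $\Omega_{jj}$ plays the role of an intercept) is itself univariate Gaussian with mean $\bz_{i,-j}^\top \bOmega_{i \cdot j}$ and unit variance. This is exactly the "GGM as a collection of regressions" observation of \citet{meinshausen2006high} recalled in the introduction; the only new ingredient is that the relevant regression coefficient vector is $\bOmega_{i \cdot j} = \bOmega_{0 \cdot j} + \bR(g_i)_{\cdot j}$, which splits into the target column $\bOmega_{0 \cdot j}$ and the nuisance column coming from $\bR(g_i)$, via \eqref{eq:model:precision}.

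Next I would write the log of this univariate Gaussian density for sample $i$ and node $j$. Up to the additive constant $-\tfrac12\log(2\pi)$ that does not depend on the parameters, it equals
\begin{align*}
-\tfrac12\paran{ z_{ij} - \bz_{i,-j}^\top(\bOmega_{0\cdot j} + \bOmega_{i\cdot j}) }^2,
\end{align*}
wait --- more carefully, $-\tfrac12\paran{ z_{ij} - \bz_{i,-j}^\top\bOmega_{i\cdot j} }^2$ with $\bOmega_{i\cdot j}$ already being the full column of $\bOmega(g_i)$; expanding the square gives $z_{ij}\,\bz_{i,-j}^\top\bOmega_{i\cdot j} - \tfrac12 z_{ij}^2 - \tfrac12(\bz_{i,-j}^\top\bOmega_{i\cdot j})^2$. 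The mild bookkeeping subtlety --- and the step I expect to need the most care --- is the reparametrization so that $\bOmega_0$ appears explicitly as a separate additive block: one substitutes $\bOmega_{i\cdot j} = \bOmega_{0\cdot j} + (\text{contribution of }\bR(g_i))$ and then, to match the notation in the statement, re-labels the $\bR$-contribution so that the expression reads with the two terms $\bz_{i,-j}^\top\bOmega_{0\cdot j} + \bz_{i,-j}^\top\bOmega_{i\cdot j}$ (i.e. the symbol $\bOmega_{i\cdot j}$ in \eqref{eq:pl} is being used for the residual confounding column after the $\bOmega_0$ part has been pulled out; this indexing convention must be stated cleanly so the formula is unambiguous). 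I would also have to be careful about the diagonal/off-diagonal split and the $1$-augmentation convention defining $\bz_{i,-j}$, checking that the quadratic and linear pieces in the exponent of the joint law in Section~\ref{sec:model} reproduce exactly the per-coordinate conditional means when one coordinate is singled out, including the $-\tfrac12\sum_j z_j^2$ normalization that fixes unit conditional variance.

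Finally I would sum the node-conditional log densities over all nodes $j \in [p]$ and all observations $i \in [n]$ --- which is precisely the definition of the (log) pseudo likelihood --- to obtain \eqref{eq:pl}, discarding the parameter-free additive constants. The resulting expression is manifestly a sum of squared-error regression terms, which is the form the subsequent optimization section exploits. In short: the proof is a direct computation, the only real content being (i) identifying each node-conditional as a unit-variance Gaussian regression with coefficient vector $\bOmega_{i\cdot j}$, and (ii) a careful, notation-consistent splitting of that coefficient into the target $\bOmega_{0\cdot j}$ and the confounding residual so that \eqref{eq:pl} comes out verbatim.
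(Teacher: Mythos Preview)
Your proposal is correct and is essentially the same direct computation the paper gives: write each node-conditional as a unit-variance Gaussian with mean $\bz_{i,-j}^\top\bOmega(g_i)_{\cdot j}$, expand the square, and sum over $i,j$. Your caution about the $\bOmega_{0\cdot j}+\bOmega_{i\cdot j}$ notation is warranted --- the paper's own supplementary proof derives only the single term $\bz_{i,-j}^\top\bOmega_{i\cdot j}$ (with $\bOmega_{i\cdot j}$ the full column of $\bOmega(g_i)$), so the extra $\bOmega_{0\cdot j}$ appearing in \eqref{eq:pl} must indeed be read under the relabeling convention you describe.
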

It should be noticed that \eqref{eq:pl} has the same form as the objective function of $p$ linear regressions each with $n$ observations and $2p$ covariates. Specifically, for the $j^{\text{th}}$ regression ($j \in [p]$), the $n \times 2p$ covariate matrix is defined as 
\begin{equation*}
\begin{bmatrix}
\bx_j & \bx_j
\end{bmatrix}
:= 
\begin{bmatrix}
\bz_{1,-j}^\top & \bz_{1,-j}^\top
\\\bz_{2,-j}^\top & \bz_{2,-j}^\top
\\ \vdots&\vdots
\\\bz_{n,-j}^\top & \bz_{n,-j}^\top
\end{bmatrix},
\end{equation*}
and the corresponding response is $\by_j := \left[z_{1j}, z_{2j}, \cdots, z_{n,j}  \right]^\top$.  

For graphical models without confounders it is known that minimizing $L_1$-regularized negative 
log PL~\citep{geng2017efficient,geng2018stochastic, geng2018temporal,  kuang2017screening} can lead to $\sqrt{n}$-sparsistent parameter estimators~\citep{yang2015graphical}. Unfortunately, 
this is no longer true for PLA-GGMs, since the number of unknown nuisance parameters, which are non-parametric,
is far too large. Instead, we leverage kernel methods and propose an approximate PL.

\subsection{Pseudo Profile Likelihood}
\label{sec:ppl}

We propose a new inductive principle to estimate $\bOmega_0$. As mentioned in Section~\ref{sec:pl}, 
the varying confounding $\bR(g_i)$'s are an obstruction to estimating $\bOmega_0$. 
We summarize the varying effects as $M_{ij} := \bx_{ij}^\top \bOmega_{i \cdot j}$, 
where $\bx_{ij}^\top$ denotes the $i^{th}$ row vector of $\bx_j$. \eqref{eq:pl} is transformed to 
\begin{align*}
\begin{split}
\ell_{PL} &\left(\left\{ \bz_i, g_i \right\}_{i \in [n]};\bR(\cdot), \bOmega_0  \right)
\\=& \sum_{i=1}^n \sum_{j = 1}^p \left\{ z_{ij} \left(  \bx_{ij}^\top \bOmega_{0 \cdot j}
+ M_{ij}
\right)
-\frac{1}{2}z_{ij}^2  \right.
\\&-\left.\frac{1}{2}\left(  \bx_{ij}^\top \bOmega_{0 \cdot j}
+ M_{ij}
\right)^2
\right\}.
\end{split}
\end{align*}
There are two unknown parts in PL: $\bOmega_0$ and $M_{ij}$. Intuitively, if we can
express $M_{ij}$'s using $\bOmega_0$, we will be able to omit $M_{ij}$ and focus on 
estimating $\bOmega_0$. This leads to the following  Lemma on approximating $M_{ij}$'s.

\begin{lemma}
\label{lem:m}
 For the $i^{\text{th}}$ observation, we define an $n\times n$ kernel weight matrix , $\bW_i$, which is a diagonal matrix with $\left[\psi\left(\abs{g_i-g_1}/h\right), \psi\left(\abs{g_i-g_2}/h\right), \cdots, \psi\left(\abs{g_i-g_n}/h\right)   \right]$. $\psi(\cdot)$ is a symmetric kernel density function, and $h > 0$ is a user specified bandwidth. Then, we define an auxiliary matrix:
\begin{equation*}
\bD_{ij} := 
\begin{bmatrix}
\mathds{1}_{\left\{ \abs{g_1}\geq g^* \right\}}\bz_{1,-j}^\top&\frac{g_1 - g_i}{h}\mathds{1}_{\left\{ \abs{g_1}\geq g^* \right\}}\bz_{1,-j}^\top
\\ \mathds{1}_{\left\{ \abs{g_2}\geq g^* \right\}}\bz_{2,-j}^\top&\frac{g_2 - g_i}{h}\mathds{1}_{\left\{ \abs{g_2}\geq g^* \right\}}\bz_{2,-j}^\top
\\ \vdots & \vdots
\\ \mathds{1}_{\left\{ \abs{g_n}\geq g^* \right\}}\bz_{n,-j}^\top&\frac{g_n - g_i}{h}\mathds{1}_{\left\{ \abs{g_n}\geq g^* \right\}}\bz_{n,-j}^\top
\end{bmatrix},
\end{equation*}
where 
\begin{equation*}
\mathds{1}_{\left\{ \abs{g}\geq g^* \right\}} := 
\begin{cases}
1 & if \abs{g}\geq g^*
\\ \leq 1 & if \abs{g}< g^*
\end{cases},
\end{equation*} 
satisfying the smoothing assumptions in Section~\ref{sec:assumption}. 

An estimator of $M_{ij}$ can be derived as
$\hat{M}_{ij}:= \bS_{ij}^\top \left(\by_j - {\bx}_j\bOmega_{0\cdot j}\right)$, where
\begin{equation*}
\bS_{ij}^\top := \begin{bmatrix}
\bx_{ij}^\top & 0 
\end{bmatrix}
\left(
\bD_{ij}^\top \bW_i \bD_{ij}
\right)^{-1}
\bD_{ij}^\top \bW_i.
\end{equation*}
\end{lemma}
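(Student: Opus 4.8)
The plan is to recognize that $M_{ij} = \bx_{ij}^\top \bOmega_{i\cdot j}$ is a smooth function of the confounder $g_i$ (because $\bOmega_{i\cdot j} = \bOmega_{0\cdot j} + \bR_{\cdot j}(g_i)$ and $\bR(\cdot)$ is smooth), so that for $g_k$ in a neighborhood of $g_i$ we may replace the unknown nuisance contribution by a local-linear expansion in $(g_k - g_i)/h$. Concretely, I would first isolate in the pseudo-likelihood, for fixed $j$, the ``residual'' response $\br_j := \by_j - \bx_j\bOmega_{0\cdot j}$, whose $k$-th entry has conditional mean (under the GGM for $\bZ\mid G=g_k$) equal to $\bx_{kj}^\top\bOmega_{k\cdot j} = M_{kj}$. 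Thus estimating $M_{ij}$ is exactly a nonparametric regression of $\br_j$ on $g$ evaluated at the point $g=g_i$.

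Next I would set up the local-linear (Nadaraya--Watson-type, but first order) estimator for this regression. Write $M_{kj} \approx a + b\,(g_k-g_i)/h$ for $g_k$ near $g_i$, where $a = M_{ij}$ is the quantity of interest. The design for this local regression is precisely the matrix $\bD_{ij}$: its first block encodes the ``intercept'' directions $\bz_{k,-j}^\top$ carrying $\bOmega_{0\cdot j}+\bR_{\cdot j}(g_k)$ weighted by the soft indicator $\mathds{1}_{\{|g_k|\ge g^*\}}$ (which, by assumption~\eqref{eq:asm-pla}, kills the nuisance term whenever $|g_k|<g^*$), and its second block encodes the ``slope'' directions $(g_k-g_i)/h\cdot\bz_{k,-j}^\top$. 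Solving the kernel-weighted least squares problem $\min \|\bW_i^{1/2}(\br_j - \bD_{ij}\bm{\gamma})\|^2$ gives $\hat{\bm{\gamma}} = (\bD_{ij}^\top\bW_i\bD_{ij})^{-1}\bD_{ij}^\top\bW_i\br_j$, and extracting the component of $\bD_{ij}\hat{\bm{\gamma}}$ that corresponds to the $i$-th observation's intercept part amounts to left-multiplying by $[\bx_{ij}^\top\ \ 0]$. This yields exactly $\hat M_{ij} = \bS_{ij}^\top(\by_j - \bx_j\bOmega_{0\cdot j})$ with $\bS_{ij}^\top = [\bx_{ij}^\top\ \ 0](\bD_{ij}^\top\bW_i\bD_{ij})^{-1}\bD_{ij}^\top\bW_i$, which is the claimed formula.

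The remaining step is to argue this is a legitimate \emph{estimator} in the sense needed downstream, i.e.\ that $\hat M_{ij} \to M_{ij}$ (in probability, at the appropriate rate) as $n\to\infty$, $h\to 0$, $nh\to\infty$. For this I would invoke the standard local-polynomial bias/variance decomposition: the bias is $O(h^2)$ coming from the second-order Taylor remainder of $\bR_{\cdot j}(\cdot)$ (using the smoothness stated in Section~\ref{sec:assumption}), and the stochastic error is $O_P((nh)^{-1/2})$ controlled by the conditional variance of $\br_j$ given $G$, provided the kernel-weighted Gram matrix $\bD_{ij}^\top\bW_i\bD_{ij}$ is asymptotically non-degenerate (positive definiteness of the local design, which follows from the density of $g$ being bounded away from zero near $g_i$ and the covariance of $\bz_{-j}$ being non-singular). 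I expect the main obstacle to be this last point: verifying invertibility/conditioning of $\bD_{ij}^\top\bW_i\bD_{ij}$ uniformly enough (and handling the soft-indicator $\mathds{1}_{\{|g|\ge g^*\}}$, which is only bounded by $1$ rather than exactly $0/1$ below $g^*$), since the second block's scaling by $(g_k-g_i)/h$ interacts with the kernel support and one must show the two blocks are not collinear in the limit. The mean-zero-residual property $\EE[\br_j\mid G] = M_{\cdot j}$, which makes the least-squares target the right thing, should be recorded carefully as it is what ties the algebraic construction to the probabilistic claim.
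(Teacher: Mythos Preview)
Your approach is essentially the same as the paper's: recognize that the pseudo-likelihood decomposes into $p$ partially-linear additive regressions, then for each regression estimate the smooth nuisance part $M_{ij}$ via a local-linear (kernel-weighted least squares) fit, which is exactly the profile-likelihood construction of \cite{fan2005profile}. The paper's proof of this lemma is in fact only two sentences long --- it simply notes the decomposition from Section~\ref{sec:pl} and cites \cite{fan2005profile} for the local-linear step --- so your unpacking of the weighted least squares $\min\|\bW_i^{1/2}(\br_j-\bD_{ij}\bm\gamma)\|^2$ and the extraction via $[\bx_{ij}^\top\ 0]$ is a more explicit rendering of the same argument.

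One remark: your final paragraph on bias/variance rates and uniform invertibility of $\bD_{ij}^\top\bW_i\bD_{ij}$ is not part of what Lemma~\ref{lem:m} asserts. The lemma only \emph{defines} the estimator $\hat M_{ij}$ and explains its provenance; the convergence properties you sketch are established separately in the paper as auxiliary results (Lemmas~\ref{lem:bound-uniform} and~\ref{lem:bound-uniform-epsilon}) feeding into Theorem~\ref{thm:sparsistency}. So for the lemma itself you can stop after deriving the formula; the rate discussion is correct in spirit but belongs with the sparsistency proof rather than here.
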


The function $\mathds{1}_{\left\{ \abs{g}\geq g^* \right\}}$ in Lemma~\ref{lem:m} is a user-specified function. In Theorem~\ref{thm:sparsistency}, we show that the value of $\mathds{1}_{\left\{ \abs{g}\geq g^* \right\}}$ does not affect the $\sqrt{n}$-sparsistency of the estimation, as long as it satisfies the definitions in Lemma~\ref{lem:m}. 

Note that given the observations, $\hat{M}_{ij}$ is only dependent on $\bOmega_0$. Therefore, by replacing $M_{ij}$ with $\hat{M}_{ij}$ in \eqref{eq:pl} and some additional transformations, we can derive an approximate log pseudo likelihood whose only unknown parameter is $\bOmega_0$. We define this as the log pseudo profile likelihood (PPL):

\begin{definition}[PPL]
	\label{def:ppl}
Following the notations above, the log PPL is defined as 
\begin{align}
\label{eq:ppl}
\begin{split}
&\ell_{PPL} \left(\left\{ \bz_i, g_i \right\}_{i \in [n]};\bR(\cdot), \bOmega_0  \right)
\\:=&
\ell_{PPL} \left(\left\{ \bz_i, g_i \right\}_{i \in [n]};\bOmega_0 \right)
\\:=& \sum_{i=1}^n \sum_{j = 1}^p \left\{ \vphantom{\int_1^2}  \left( \bm{1}_{i} - \bS_{ij}\right)^\top\by_j \left( \bm{1}_{i} - \bS_{ij}\right)^\top \bx_j \bOmega_{0\cdot j}\right.
\\&-\left.
\frac{1}{2}\left[\left( \bm{1}_{i} - \bS_{ij}\right)^\top\by_j\right]^2 \right.
\\ &-\left.\frac{1}{2}\left[\left( \bm{1}_{i} - \bS_{ij}\right)^\top \bx_j \bOmega_{0\cdot j}
\right]^2
\right\},
\end{split}
\end{align}
where $\bm{1}_{i}$ is an $n\times 1$ vector, whose $i^\text{th}$ component is $1$ and others are $0$'s.
\end{definition}

The proposed PPL shares a close relationship with the profile likelihood~\citep{speckman1988kernel, fan2005profile}: if the components of $\bZ_i$ are independent of each other, the form of PPL is equivalent to the log profile likelihood. However, we do not make any assumptions on the independence here, which makes PPL a type of log pseudo likelihood. Such a rationale of intentionally overlooking the dependency is widely used in the derivation of various types of pseudo likelihoods including the one in \citealt{huang2012maximum}. However, \citealt{huang2012maximum} focus on Cox regression for the longitudinal data analysis which is different from our setting. Also, the inductive principle in \citealt{huang2012maximum} emphasizes the consistency, while we will show that a $\sqrt{n}$-sparsistent estimator can be achieved by using the PPL.

\subsection{$L_1$-Regularized MaPPLE}
\label{sec:mpple}
With the proposed PPL \eqref{eq:ppl}, we can now derive an estimator for $\bOmega_0$. For the ease of presentation, we will use $F(\bOmega_0)$ to denote $\frac{-\ell_{PPL} \left(\left\{ \bz_i, g_i \right\}_{i \in [n]};\bR(\cdot), \bOmega_0  \right)}{n}$. Then, the $L_1$-regularized MaPPLE is derived as
\begin{align}
\label{eq:l1-mpple}
\begin{split}
\hat{\bOmega}_0 := \argmin_{\bOmega_0} F(\bOmega_0)+ \lambda \norm{\bOmega_0},
\end{split}
\end{align}
where $\norm{\bOmega_0} = \sum_{j}^p \sum_{j'>j}^p \abs{\Omega_{0jj'}} $, and $\lambda$ is the regularization parameter.

Note that \eqref{eq:l1-mpple} has the same form as a regularized weighted least square problem. Therefore, the optimization can be efficiently solved using the coordinate descent method~\citep{glmnet}, combined with the strong screening rule~\citep{tibshirani2012strong}. We implement the optimization using the R package glmnet~\citep{glmnet}.

\section{Sparsistency of the $L_1$-Regularized MaPPLE}
\label{sec:consistency}
The $L_1$-regularized MaPPLE \eqref{eq:l1-mpple} is proved to be $\sqrt{n}$-sparsistent under some mild assumptions.

\subsection{Assumptions}
\label{sec:assumption}
To start with, we discuss the assumptions for the estimator. Since the estimation of $M_{ij}$ in Lemma~\ref{lem:m} is based on kernel methods, we need some standard assumptions widely used in this literature~\citep{mack1982weak, fan2005profile, kolar2010estimating}. The following assumptions are concerned with the order of $n$, $p$, and $h$, and the smoothness.
\begin{assumption}
\label{asm:order-n-h}
Define $c_n = \sqrt{\frac{-\log h}{nh}} + h^2$ with $h \in (0,1)$ and $p>1$. Then, we assume that there exists $C_1>0$, so that $c_n^2 \leq C_1 \sqrt{\frac{\log p}{n}}$.
\end{assumption}
\begin{assumption}
	\label{asm:smooth}
	For any $g$, the following matrices are all element-wise Lipschitz continuous with respect to $g$:
	\begin{align*} 
&\EE\left(\bZ^\top \bZ \given G = g  \right), 
\\ &\EE\left(\mathds{1}^2_{\left\{ \abs{g}\geq g^* \right\}}  \bZ^\top \bZ \given G = g\right), 
\\&\text{and} \,\, \EE\left(\mathds{1}^2_{\left\{ \abs{g}\geq g^* \right\}}  \bZ^\top \bZ \given G = g\right)^{-1}.
\end{align*}

\end{assumption}
Also, since we do not pose parametric assumptions to $\bR(g)$ and $f(\cdot)$, we further need the following assumptions on both. 
\begin{assumption}
	\label{asm:f-r}
	The random variable $G$ has a bounded support, and $f(\cdot)$ is Lipschitz continuous and bounded away from 0 on its support. $\bR(g)$ has continuous second derivative. 
	\end{assumption}

Next, we introduce an assumption required for sparsistency. The following mutual incoherence condition is vital to the sparsistency~\citep{ravikumar2010high}. Here, we define $\bOmega_0^*$ as the underlying parameter, and treat $\bOmega^*_0$ as a vector containing all the components without repeats.

\begin{assumption}
	\label{asm:incoherence} 
	Define $A$ as the index set of the non-diagonal and non-zero components of $\bOmega_0^*$, $D$ as the index set of the diagonal components of $\bOmega_0^*$, and $N$ as the index set of the non-diagonal and zero components of $\bOmega_0^*$. Define the incoherence coefficient as $0<\alpha<1$. Then for $\bH=\grad^2 F(\bOmega_0^*)$, there exists $C_2 > 0$, so that $\Norm{\bH_{NS}\bH_{SS}^{-1}}_{\infty} \le 1-\alpha$ and $\Norm{\bH_{SS}^{-1}}_{\infty}\le C_2$, where we use the index sets as subscripts to represent the corresponding components of a vector or a matrix.
\end{assumption}
Our final assumption is required by the fixed point proof technique we apply \citep{ortega2000iterative, yang2011use}, and may not be necessary for more calibrated proofs. 
\begin{assumption}
	\label{asm:fix}
	Define $\bm{R}(\bDelta):=\grad F(\bOmega_0)-\grad F(\bOmega_0^*) - \grad^2 F(\bOmega_0^*) (\bOmega_0-\bOmega_0^*)$, where $\norm{\bDelta}_{\infty} \leq r:= 4C_2 \lambda \leq \frac{1}{C_2 C_3}$ with $\bDelta_N = \bzero$, and for some $C_3>0$. Then $\Norm{\bm{R}(\bDelta)}_{\infty} \le C_3 \Norm{\bDelta}_{\infty}^2$.
\end{assumption}

\subsection{Main Theoretical Results}
With the assumptions in Section~\ref{sec:assumption}, the $\sqrt{n}$-sparsistency of the $L_1 $-regularized MaPPLE is provided in Theorem~\ref{thm:sparsistency}.

\begin{theorem}
	\label{thm:sparsistency}
	Suppose that Assumption~\ref{asm:order-n-h} - \ref{asm:fix} are satisfied.  Then, for any $\epsilon >0$, with probability of at least $1-\epsilon$, there exists $C_4>0$, so that $\hat{\bOmega}_0$ shares the same structure with the underlying true parameter $\bOmega_0^*$, if for some constant $C_5>0$,
	\begin{align*}
	&C_5 \sqrt{\frac{\log p}{n}} \geq \lambda  \geq  \frac{4}{\alpha} C_4\sqrt{\frac{\log p}{n}},
	\\&r :=4C_2 \lambda \le \Norm{\bOmega^*_{0S}}_{\infty},
	\end{align*}
	and $n\ge \left(64C_5  C_2^2C_3/\alpha\right)^2\log p$.
\end{theorem}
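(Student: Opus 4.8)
The plan is to follow the now-standard primal-dual witness (PDW) strategy of \citet{ravikumar2010high}, adapted to the pseudo-profile likelihood objective $F(\bOmega_0)$. First I would set up the restricted problem: solve \eqref{eq:l1-mpple} subject to the additional constraint $\bOmega_{0N} = \bzero$, and call the solution $\tilde{\bOmega}_0$. Writing the KKT/stationarity conditions of this restricted problem, one obtains $\grad_S F(\tilde{\bOmega}_0) + \lambda \bz_S = \bzero$ for a subgradient $\bz_S$ with $\norm{\bz_S}_\infty \le 1$. The goal then splits into two parts: (i) show $\tilde{\bOmega}_0$ has the same support $S = A \cup D$ as $\bOmega_0^*$ (no false exclusions), which amounts to a lower bound $\norm{\tilde{\bOmega}_{0S} - \bOmega_{0S}^*}_\infty < \norm{\bOmega_{0S}^*}_\infty$ on the minimum nonzero magnitude; and (ii) show strict dual feasibility, i.e. the subgradient at coordinates in $N$ reconstructed from the stationarity condition has $\ell_\infty$-norm strictly below $1$, which certifies that $\tilde{\bOmega}_0$ is in fact the unconstrained solution and $\hat{\bOmega}_0 = \tilde{\bOmega}_0$.

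The analytic core is controlling the remainder and the gradient. Using Assumption~\ref{asm:fix}, I would Taylor-expand: $\grad F(\tilde{\bOmega}_0) - \grad F(\bOmega_0^*) = \bH\,\bDelta + \bm{R}(\bDelta)$ with $\bH = \grad^2 F(\bOmega_0^*)$ and $\bDelta = \tilde{\bOmega}_0 - \bOmega_0^*$, $\bDelta_N = \bzero$. Restricting to $S$ and inverting $\bH_{SS}$ (bounded by $C_2$ via Assumption~\ref{asm:incoherence}) gives $\bDelta_S = -\bH_{SS}^{-1}\bigl(\grad_S F(\bOmega_0^*) + \lambda \bz_S + \bm{R}_S(\bDelta)\bigr)$, hence $\norm{\bDelta_S}_\infty \le C_2\bigl(\norm{\grad_S F(\bOmega_0^*)}_\infty + \lambda + C_3\norm{\bDelta}_\infty^2\bigr)$. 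A fixed-point / contraction argument in the ball $\norm{\bDelta}_\infty \le r := 4C_2\lambda$ — exactly the radius named in Assumption~\ref{asm:fix}, where $C_2 C_3 r \le 1$ keeps the quadratic term dominated — shows this ball is mapped into itself once $\norm{\grad_S F(\bOmega_0^*)}_\infty \lesssim \lambda$, yielding $\norm{\bDelta_S}_\infty \le 4C_2\lambda = r$. Combined with $r \le \norm{\bOmega_{0S}^*}_\infty$ from the theorem's hypothesis, this gives part (i). For part (ii), substituting $\bDelta_S$ into the $N$-block stationarity condition and using $\bH_{NS}\bH_{SS}^{-1}$ incoherence ($\le 1-\alpha$) bounds the reconstructed $N$-subgradient by $(1-\alpha) + \tfrac{1}{\lambda}(\text{gradient and remainder terms})$; choosing $\lambda \ge \tfrac{4}{\alpha}C_4\sqrt{\log p/n}$ makes the slack terms at most $\alpha/2 \cdot$(stuff) so the total is $< 1$.

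The probabilistic ingredient — and the step I expect to be the main obstacle — is the concentration bound $\norm{\grad F(\bOmega_0^*)}_\infty \le C_4 \sqrt{\log p / n}$ with probability $\ge 1-\epsilon$. Unlike the vanilla GGM pseudo-likelihood score, here $\grad F(\bOmega_0^*)$ involves the kernel-smoothed quantities $\bS_{ij}$ and the nuisance approximation $\hat{M}_{ij}$, so its deviation from zero has two sources: the usual mean-zero stochastic fluctuation of the score, and a deterministic kernel-approximation bias of order $c_n^2 = \bigl(\sqrt{-\log h/(nh)} + h^2\bigr)^2$ coming from Lemma~\ref{lem:m}. I would bound the bias using the Lipschitz/second-derivative smoothness in Assumptions~\ref{asm:smooth}--\ref{asm:f-r} and standard local-linear kernel estimation rates (à la \citet{mack1982weak, fan2005profile}), and the stochastic part by sub-Gaussian/sub-exponential tail bounds combined with a union bound over the $O(p^2)$ coordinates. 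Assumption~\ref{asm:order-n-h}, which forces $c_n^2 \le C_1\sqrt{\log p/n}$, is precisely what makes the bias negligible relative to the target rate, so both contributions collapse into the single $\sqrt{\log p/n}$ term. Finally I would check the sample-size condition $n \ge (64 C_5 C_2^2 C_3/\alpha)^2 \log p$ is exactly what is needed so that $r = 4C_2\lambda \le 4C_2 C_5\sqrt{\log p/n}$ satisfies $C_2 C_3 r \le 1$ (up to the constants bookkeeping), closing the fixed-point argument. Uniqueness of $\hat{\bOmega}_0$ then follows from strict dual feasibility plus strict convexity of $F$ on the support, completing the support-recovery claim.
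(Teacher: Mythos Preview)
Your proposal is correct and follows essentially the same route as the paper: a primal--dual witness argument on the restricted problem, a fixed-point bound $\norm{\bDelta_S}_\infty \le r = 4C_2\lambda$ using Assumptions~\ref{asm:incoherence}--\ref{asm:fix}, strict dual feasibility via the incoherence bound, and a separate concentration lemma $\norm{\grad F(\bOmega_0^*)}_\infty \le C_4\sqrt{\log p/n}$ obtained by splitting the score into a kernel-bias piece of order $c_n^2$ (handled by local-linear rates and Assumption~\ref{asm:order-n-h}) plus a stochastic piece handled by sub-exponential tails and a union bound. One small bookkeeping note: in the paper the fixed-point step closes directly from the hypothesis $r\le 1/(C_2C_3)$ of Assumption~\ref{asm:fix}, while the explicit sample-size condition $n\ge(64C_5C_2^2C_3/\alpha)^2\log p$ is invoked in the SDF step to force the remainder bound $\norm{\bm{R}(\tilde{\bDelta})}_\infty\le\alpha\lambda/4$ rather than to close the fixed point---but this is a matter of where the constants are spent, not a different argument.
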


According to Theorem~\ref{thm:sparsistency}, the $L_1$-regularized MaPPLE recovers the true structure of $\bOmega_0$with a high probability. Also, the scale of the estimation error denoted by $r$ is less than $4C_2C_5 \sqrt{\frac{\log p}{n}}$, which converges to zero at a rate of $\sqrt{n}$. In other words, the smallest scale of the non-zero component that the PPL method can distinguish from zero in the true parameter converges to zero at a rate of $\sqrt{n}$. We refer to this result as $\sqrt{n}$-sparsistency.

Such a convergence rate is faster than ordinary nonparametric methods, which often have a $n^{-2/5}$ convergence rate~\citep{speckman1988kernel,kolar2010estimating}. Also, the $\sqrt{n}$-sparsistency matches the results of semi-parametric methods~\citep{fan2005profile,fan2008statistical} for discriminative models, where the estimated parametric part is shown to be $\sqrt{n}$-consistent.  

\begin{figure*}[t]
	\centering
	\begin{subfigure}{0.23\textwidth}
		\centering
		\includegraphics[scale=0.2]{./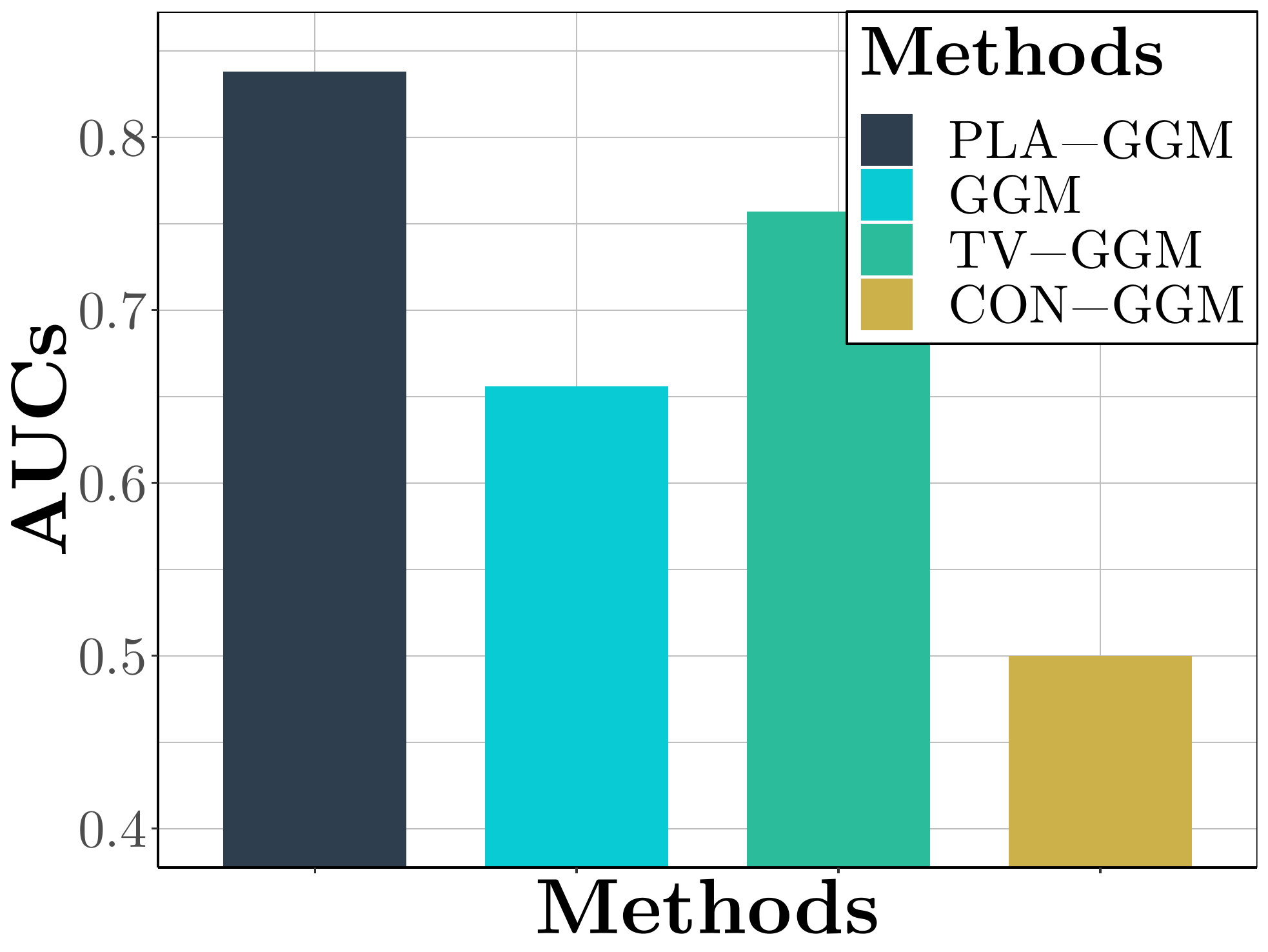}
		\caption{$p = 10$}
	\end{subfigure}
	\centering
	\begin{subfigure}{0.23\textwidth}
		\centering
		\includegraphics[scale=0.2]{./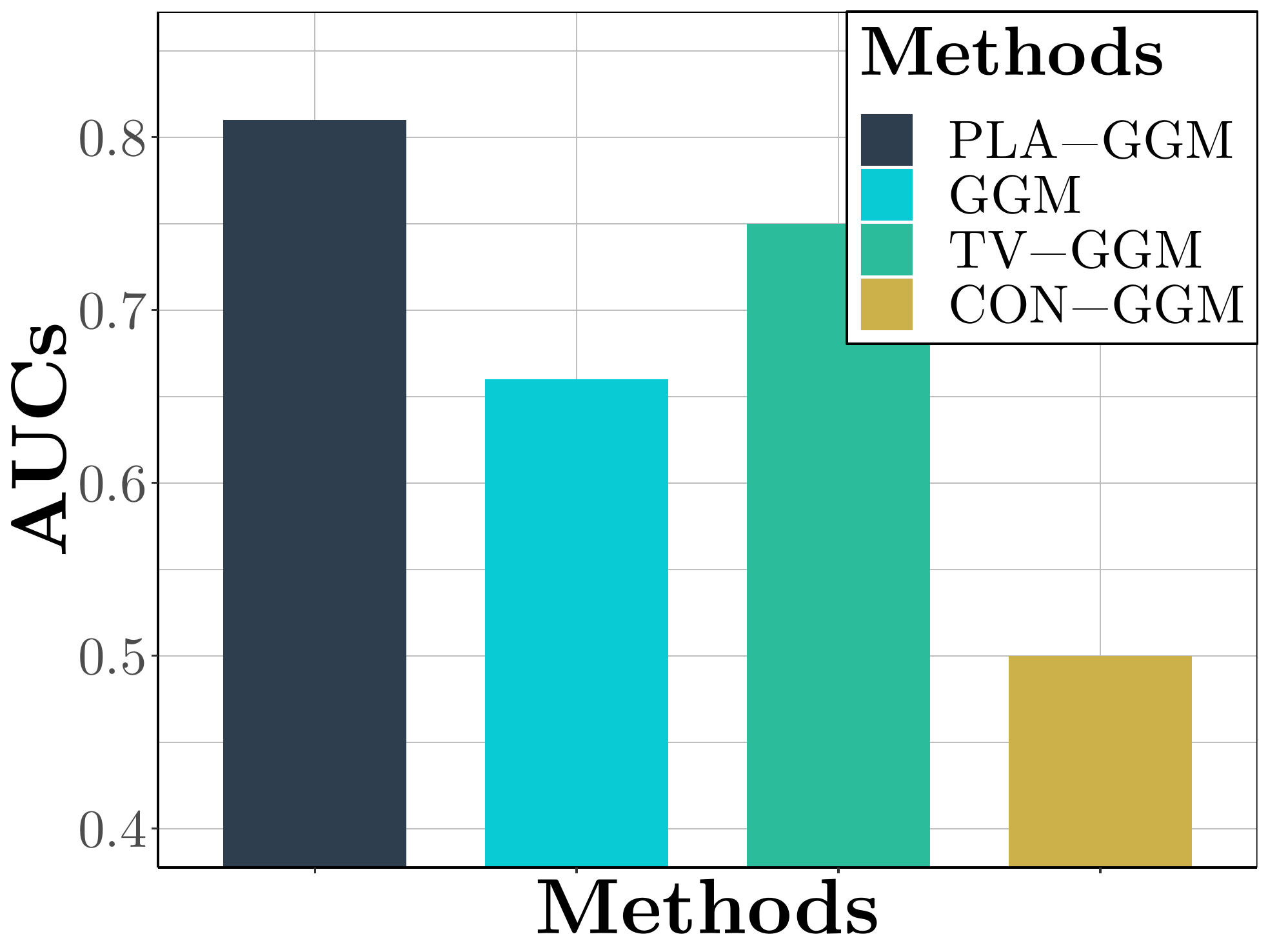}
		\caption{$p = 20$}
	\end{subfigure}
	\centering
	\begin{subfigure}{0.23\textwidth}
		\centering
		\includegraphics[scale=0.2]{./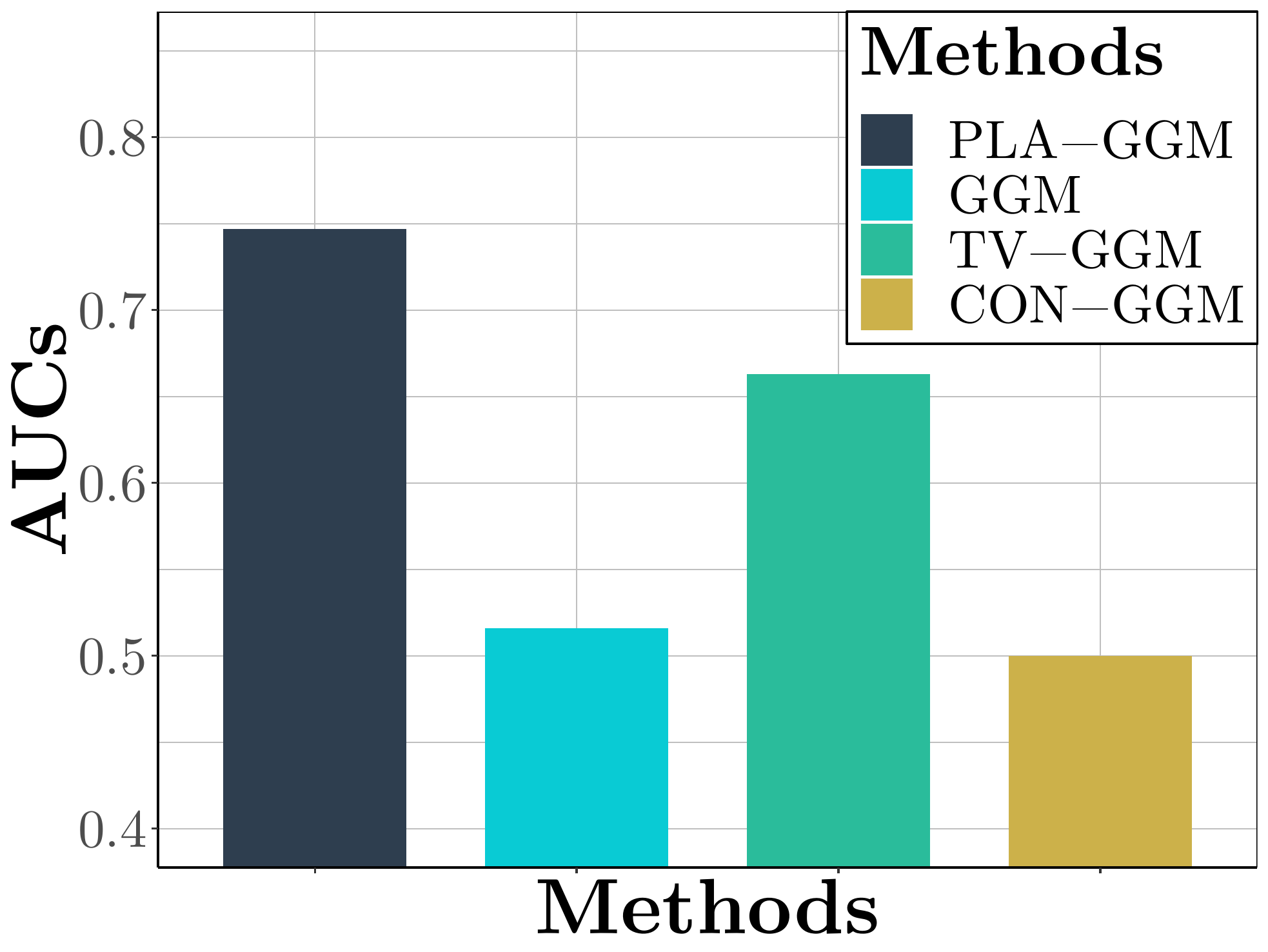}
		\caption{$p = 50$}
	\end{subfigure}
	\centering
	\begin{subfigure}{0.23\textwidth}
		\centering
		\includegraphics[scale=0.2]{./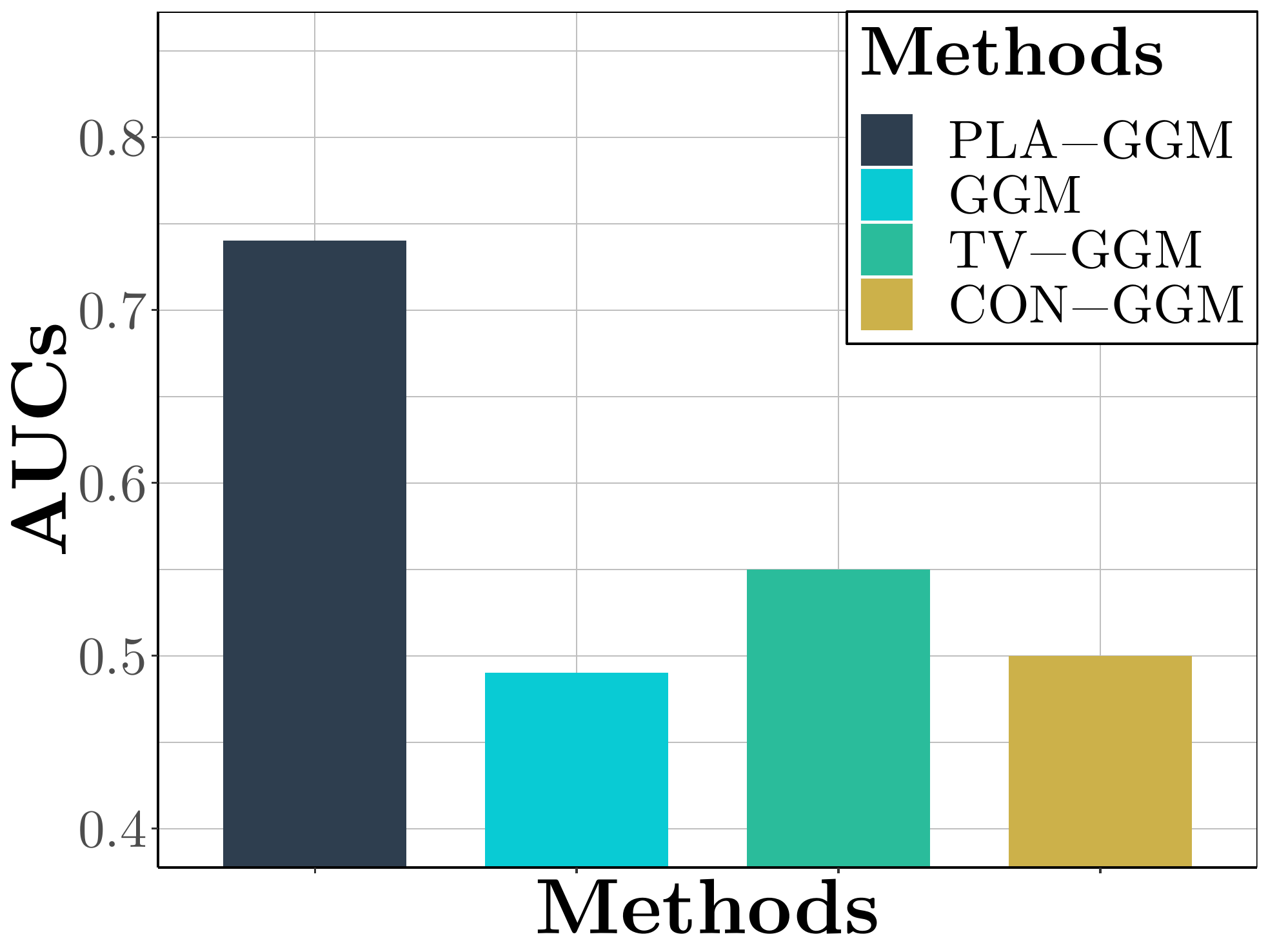}
		\caption{$p = 100$}
	\end{subfigure}
	\centering
	\caption{Area under curve (AUC) of considered methods for the structure learning with different numbers of variables.}
	\label{fig:auc}
\end{figure*}

In Theorem~\ref{thm:sparsistency}, the value of $g^*$ does not affect the $\sqrt{n}$-sparsistency of the estimator. In practice, however, if $g^*$ is too small, the $\left(
\bD_{ij}^\top \bW_i \bD_{ij}
\right)$ tends to be singular, since few samples are observed with $\abs{g} \leq g^*$. Accordingly, the PPL method will be not applicable. Therefore, we need to observe some non-confounded samples to implement the PPL method. The $\sqrt{n}$-sparsistency is not directly related to the selected $\mathds{1}_{\left\{ \abs{g}\geq g^* \right\}}$ either. Along the proof of Theorem~\ref{thm:sparsistency} in the Supplements, we notice that $\mathds{1}_{\left\{ \abs{g}\geq g^* \right\}}$ (and thus $g^*$) can only affect some auxiliary constants. Since this relationship is neither significant, nor straightforward, we do not discuss it here.

\section{Related Methods}
\label{sec:competing-method}

After a thorough analysis on the proposed PLA-GGM and PPL method, we now study some related methods that fall into four categories: Gaussian graphical models incorporating confounders, denoted by CON-GGMs; the Gaussian graphical models using linear regression to deal with confounders~\citep{van2012influence, power2014methods} denoted by LR-GGMs; original Gaussian graphical models only using non-confounded samples, denoted by GGMs; and time-varying Gaussian graphical models~\citep{kolar2010estimating, yang2015fused} denoted by TV-GGMs. Theoretically, the proposed PLA-GGM is more generalized and facilitates faster-converging estimators than the existing models.   

\subsection{CON-GGMs and LR-GGMs}

Although not designed for this task, it is possible to apply more standard graphical modeling approaches 
to deal with some of the effects of observed confounders. For instance, a straightforward 
alternative to PLA-GGMs is to directly incorporate the confounder as a random variable into the GGM. 
Specifically, CON-GGMs assume that the confounder $G$ follows a GGM jointly with the random vector $\bZ$, 
which means
\begin{equation}
\label{eq:con-ggm}
\left( G, \bZ \right) \sim \text{GGM} (\bOmega),
\end{equation} 
where the joint covariance matrix follows
\begin{equation*}
\bSigma :=\bOmega^{-1}=
\begin{bmatrix}
\bSigma_{\bZ \bZ}&\bSigma_{\bZ G}
\\ \bSigma_{G \bZ} & \bSigma_{GG}
\end{bmatrix}.
\end{equation*}
Since the target structure is for $\bZ\given G=0$, we can estimate $\bOmega$ by graphical Lasso~\citep{friedman2008sparse} first, and then derive the inverse conditional covariance matrix for $\bZ\given G=0$.

LR-GGM is a model widely used in the neuroscience area~\citep{van2012influence, power2014methods}, assuming that the confounders will cause a linear confounding to the observed samples. The model can be formulated as: 
\begin{equation}
\label{eq:lr-ggm}
\bZ = \bbeta^{\top}G + \bZ',
\end{equation} 
where $\bZ'$ follows a Gaussian graphical model with parameter $\bOmega$, and $G$ satisfies Assumption~\ref{asm:f-r}. Since conditional on $G=0$, $\bZ$ is equivalent to $\bZ'$, the target parameter for the non-confounded structure is just $\bOmega$. LR-GGMs use linear regressions to recover $\bbeta$, and further to regress out confoundings. Finally, LR-GGMs estimate $\bOmega_0$ by graphical Lasso.

By deriving the inverse covariance matrices of $\bZ$ conditional on $G$ for both CON-GGMs and LR-GGMs, it should be noticed that the inverse conditional covariance matrices are irrelevant to the value of $G$. In other words, the confounder $G$ does \emph{not} affect the conditional independence structure of $\bZ$, which is often an unrealistic restriction. In contrast, PLA-GGM particularly deals with confounding of the structure by $G$. Further following this direction, we can derive the following theorem which describes the the relationship among CON-GGMs, LR-GGMs, and PLA-GGMs. 
\begin{theorem}
\label{thm:con-lr-ggm}
The CON-GGM \eqref{eq:con-ggm} and the LR-GGM \eqref{eq:lr-ggm} are two special cases of the PLA-GGM by respectively assuming: 

\begin{itemize}
\item $G$ follows a normal distribution, $\bR(g):=0$ and $\bOmega_0 := \left[
\bSigma_{\bZ \bZ}  
- \bSigma_{\bZ G}
\bSigma_{GG}^{-1}
\bSigma_{G \bZ} 
\right]^{-1}$; 
\item $\bR(g) := 0$.
\end{itemize}

\end{theorem}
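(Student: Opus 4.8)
The plan is to verify directly that each of the two models, after the stated substitutions, reproduces exactly the conditional law $\bZ \given G = g$ prescribed by the PLA-GGM in \eqref{eq:model:precision} together with the vanishing-nuisance constraint \eqref{eq:asm-pla}. Since a PLA-GGM is fully determined by the pair $\{\bOmega_0, \bR(\cdot)\}$ via the conditional density displayed in Section~\ref{sec:model}, it suffices to show that the conditional precision matrix of $\bZ \given G = g$ implied by each competing model is of the form $\bOmega_0 + \bR(g)$ for the claimed $\bOmega_0$ and $\bR(\cdot)$.

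First, for the CON-GGM \eqref{eq:con-ggm}: when $(G,\bZ)$ is jointly Gaussian with precision $\bOmega$ and covariance $\bSigma$ partitioned as in the text, the standard Gaussian conditioning formula gives that $\bZ \given G = g$ is Gaussian with covariance $\bSigma_{\bZ\bZ} - \bSigma_{\bZ G}\bSigma_{GG}^{-1}\bSigma_{G\bZ}$, which does \emph{not} depend on $g$ (only the mean shifts with $g$). Hence the conditional precision is the constant matrix $\bOmega_0 := \left[\bSigma_{\bZ\bZ} - \bSigma_{\bZ G}\bSigma_{GG}^{-1}\bSigma_{G\bZ}\right]^{-1}$, and setting $\bR(g) := \bm 0$ for all $g$ we recover precisely a PLA-GGM (trivially satisfying \eqref{eq:asm-pla}). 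The normality of $G$ is what makes $(G,\bZ)$ jointly Gaussian in the first place, so it is needed only to place the model inside the PLA-GGM family; I would state this as the one substantive modeling assumption. A small bookkeeping point: the PLA-GGM normalization fixes the conditional variances of $\bZ \given G=g$ to $1$, so strictly one should rescale $\bZ$ (or note that the family is defined up to this diagonal normalization, as the paper already does "without loss of generality"); I would dispatch this in a sentence.

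Second, for the LR-GGM \eqref{eq:lr-ggm}: here $\bZ = \bbeta^\top G + \bZ'$ with $\bZ' \sim \mathrm{GGM}(\bOmega)$ independent of $G$. Conditioning on $G = g$ only translates $\bZ$ by the deterministic vector $\bbeta^\top g$, leaving the conditional covariance (and hence precision) equal to that of $\bZ'$, namely $\bOmega$, for every $g$. So again the conditional precision is constant in $g$: take $\bOmega_0 := \bOmega$ and $\bR(g) := \bm 0$, and \eqref{eq:asm-pla} holds trivially. This also correctly matches the target parameter identified in the text, since $\bZ \given G = 0$ equals $\bZ'$ in distribution.

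I do not anticipate a genuine obstacle here — both claims reduce to the elementary fact that in a jointly Gaussian (or additively-in-$G$ Gaussian) model the conditional covariance of $\bZ$ given $G$ is free of the conditioning value — so the "hard part" is purely expository: being careful that the PLA-GGM's unit-conditional-variance normalization is respected, and making explicit that "special case" means the nuisance term is identically zero (equivalently $g^* = \infty$ in \eqref{eq:asm-pla}, as already observed after that equation). I would close by remarking that this is exactly why these models cannot capture $G$-dependent structure, motivating the general $\bR(\cdot) \neq \bm 0$ case.
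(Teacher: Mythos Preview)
Your proposal is correct and follows essentially the same approach as the paper's proof: both arguments reduce each model to the observation that the conditional precision of $\bZ \given G=g$ is free of $g$ (via the standard Gaussian conditioning formula for CON-GGM, and via the translation argument for LR-GGM), then identify $\bR(g)\equiv\bm 0$ and the corresponding $\bOmega_0$. Your treatment is in fact slightly more careful than the paper's in flagging the unit-variance normalization and the role of the normality assumption on $G$, but the substance is the same.
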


Thus, it is clear that CON-GGMs and LR-GGMs both assume a \emph{constant} underlying structure irrelevant to $G$, and are parametric special cases of the proposed PLA-GGMs. Also, since the two methods assume $\bR(g)=0$ either exactly or asymptotically, they will treat the average of $\bOmega(g)$ as the underlying $\bOmega_0$ and derive incorrect structures that are too dense.

\subsection{GGMs and TV-GGMs}

In PLA-GGMs, it is assumed that some non-confounded samples are
observed. Therefore, we can directly apply GGM to the non-confounded
samples and estimate the structure. However, by doing this, only the
information from the non-confounded data are used. The estimators are
obviously not $\sqrt{n}$-sparsistent, considering that most of the $n$
observed samples are confounded and not used by GGM. Thus, the method
is less accurate than PLA-GGM.

Another class of relevant methods are time-varying graphical models
(TV-GMs)~\citep{le09keller,song09time,kolar2010estimating,kolar10estimating}, used to
estimate a different parameter at each time point or
observation. Specifically, TV-GGMs assume that $\bZ$ follows a varying
Gaussian graphical model over $G$. Methods like fused
Lasso~\citep{yang2015fused, zhu2018clustered} and the kernel
estimation~\citep{kolar2010estimating} are applied to estimate the
varying structure of $\bZ$. One may consider applying such a model,
then perhaps averaging the time-varying graph to estimate the
non-confounded component. However, since the target of such methods
are multiple structures, the estimators can only be guaranteed to be
$n^{-2/5}$-consistent. In contrast,
PLA-GGMs 
use all the samples to recover the parameter representing the
underlying non-confounded structure, and can achieve
$\sqrt{n}$-sparsistency.

\begin{figure*}[t]
	\centering
	\begin{subfigure}{0.23\textwidth}
		\centering
		\includegraphics[scale=0.6]{./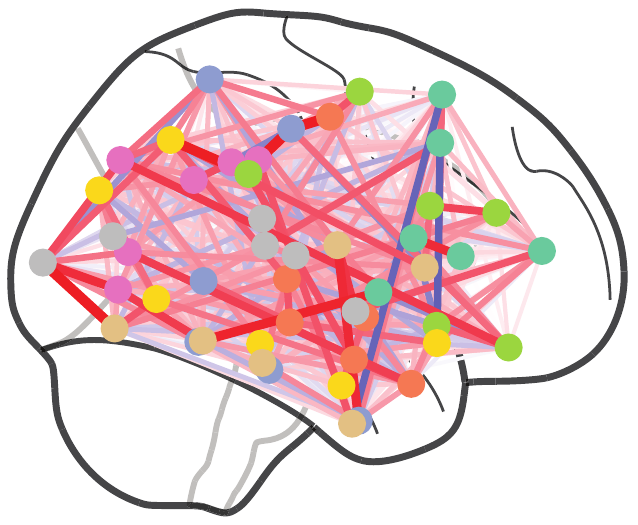}
		\caption{Controls\\PLA-GGM}
	\end{subfigure}
	\centering
	\begin{subfigure}{0.23\textwidth}
		\centering
		\includegraphics[scale=0.6]{./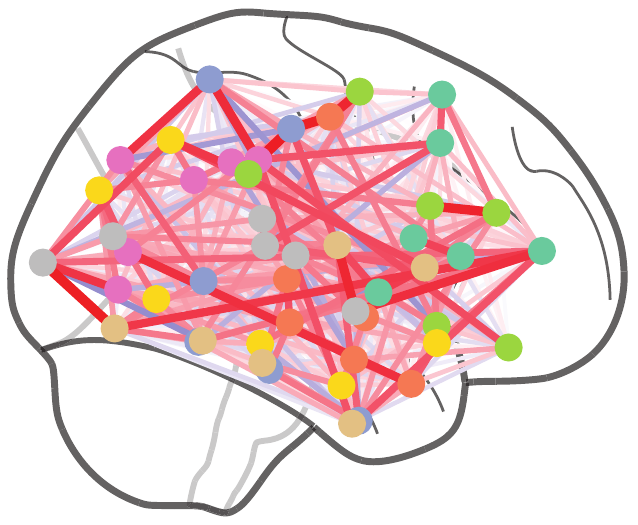}
		\caption{Schizophrenia\\PLA-GGM}
	\end{subfigure}
	\centering
	\begin{subfigure}{0.23\textwidth}
		\centering
		\includegraphics[scale=0.6]{./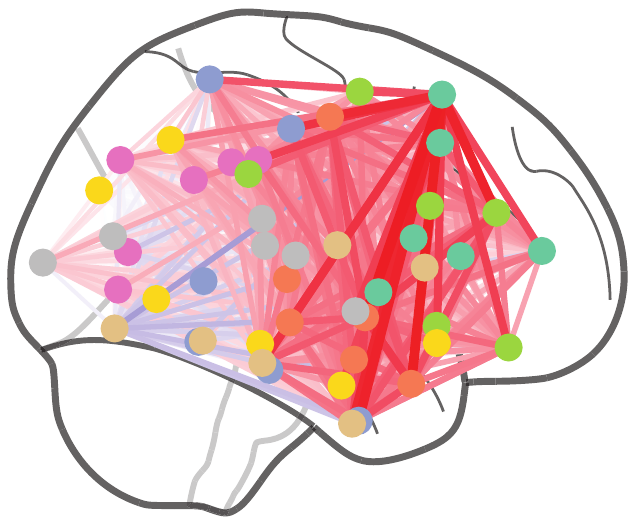}
		\caption{Controls\\LR-GGM}
	\end{subfigure}
	\centering
	\begin{subfigure}{0.23\textwidth}
		\centering
		\includegraphics[scale=0.6]{./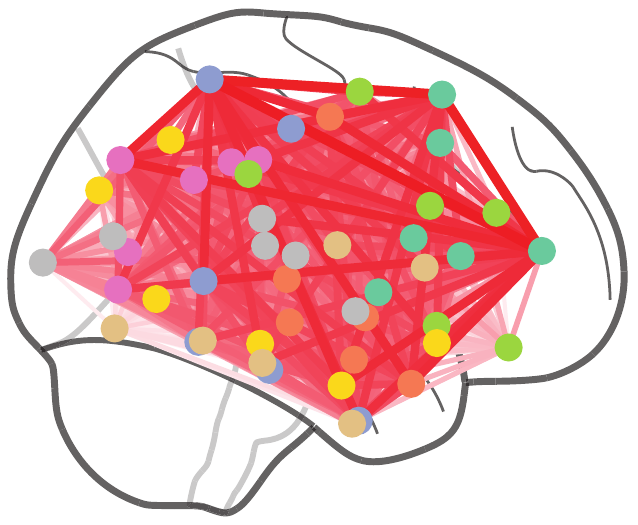}
		\caption{Schizophrenia\\LR-GGM}
	\end{subfigure}
	\centering
	\caption{Glass brains for the estimated brain functional connectivity for subjects with schizophrenia and controls using PLA-GGMs and LR-GGMs. }
	\label{fig:glass-brain}
\end{figure*}

\subsection{Graphical Models with Nonparametric Methods}

PLA-GMM is not the first approach to incorporate nonparametric methods
into graphical models. Prior works like
\citealt{liu2009nonparanormal,kolar10nonparametric,voorman2013graph, Wang2014Inference,
  suggala2017expxorcist}, and \citealt{lu2015kernel, Lu2015Posta} have tried to
relax the parametric definition of graphical models to realize a more
generalized model. However, these methods do not help much to deal
with observed confounders, since the structure among the random
variables is assumed to be independent of the values of the
confounders. Partially linear additive models have also been combined
with directed acyclic graphs in \cite{rothenhausler2018causal}, which
was developed for causal inference and not the structure analysis.

\section{Experiments}
\label{sec:experiment}
To demonstrate the empirical performance of the proposed PLA-GGM and PPL method, we apply them to synthetic data for a structure recovery task in Section~\ref{sec:synthetic} and a real fMRI dataset for a brain functional connectivity estimation task in Section~\ref{sec:real-world}.

\subsection{Structure Recovery}
\label{sec:synthetic}  
In this section, we use simulated data to compare PLA-GGM, TV-GGM and CON-GGM discussed in Section~\ref{sec:competing-method}, with the proposed PLA-GGM for structure recovery. We simulate data from PLA-GGMs following the procedure provided in the Supplement.
We consider the case of $p=10, 20, 50, 100$. For all these settings, we fix $n = 800$ samples. Then, the four methods are applied to the generated datasets to recover the underlying conditional independence structure. The regularization parameter $\lambda$ is selected by 10-fold cross validation from a series of auto generated $\lambda$'s by glmnet. The bandwidth is determined according to Assumption~\ref{asm:order-n-h}. We use 
$\mathds{1}_{\left\{ \abs{g}\geq g^* \right\}} =1- \exp(-k^2g^2)/2$,
where $k$ is selected according to the designated $g^*$. We have also studied other forms for $\mathds{1}_{\left\{ \abs{g}\geq g^* \right\}}$, which did not significantly affect the performance. 

The achieved area under curve (AUC) of the receiver operating characteristic using the selected hyper parameters are reported in Figure~\ref{fig:auc}. Consistent with the analysis in Section~\ref{sec:competing-method}, the proposed PLA-GGM achieves higher AUCs on structure recovery than the competing methods. Also, as the number of variables increases, the advantage of PLA-GGM gets more significant. The phenomenon results from the $\sqrt{n}$-sparsistency of the $L_1$-regularized MaPPLE, which is more accurate and requires less data. It should also be noticed that the AUC achieved by CON-GGM is always around 0.5. The reason is that, following the data simulation procedure, the true $\bOmega(g_i)$'s are always dense, although $\bOmega_0$ is sparse. As suggested by the analysis in Section~\ref{sec:competing-method}, CON-GGM treats $\bOmega(g)$ as the $\bOmega_0$, and thus tends to recover a wrongly dense $\bOmega_0$.    

\subsection{Brain Functional Connectivity Estimation}
\label{sec:real-world}
We apply the PLA-GGM to the 1000
Functional Connectomes Project Cobre dataset~\citep{COBRE13:online}, from the Center for Biomedical Research Excellence. The dataset contains 147 subjects with 72 subjects with schizophrenia and 75 healthy controls. For each subject, resting state fMRI time series and the corresponding confounders are recorded. We use the 7 confounders
provided in the dataset relate to motion for the analysis, and apply Harvard-Oxford Atlas to
select the 48 atlas regions of interest (ROIs). Additional preprocessing details are deferred to the dataset authors~\citep{COBRE13:online}. The performance of PLA-GGM is compared to LR-GGM, which is the most widely-used method to deal with motion confounding in the fMRI literature~\citep{van2012influence, power2014methods}. 

We use 
$\mathds{1}_{\left\{ \abs{g}\geq g^* \right\}} =1- \exp(-100g^2)/2$ for the following analysis, which is equivalent to $g^*=0.578$. If the selected $g^*$ is less than the largest possible value, the estimation should still be accurate, since \eqref{eq:asm-pla} is satisfied. However, a too small $g^*$ may induce a singular $\left(
\bD_{ij}^\top \bW_i \bD_{ij}
\right)$ and thus the failure of the PPL method. We select the smallest $g^*$ where PPL can be successfully implemented, and use the corresponding $\mathds{1}_{\left\{ \abs{g}\geq g^* \right\}}$. The results using other $g^*$'s and $\mathds{1}_{\left\{ \abs{g}\geq g^* \right\}}$'s are reported in the Supplements. Due to Theorem~\ref{thm:sparsistency}, the form of $\mathds{1}_{\left\{ \abs{g}\geq g^* \right\}}$ does not affect the sparsistency of the estimator, and thus has a limited effect on the performance. 

\begin{figure*}[t]
	\begin{minipage}[b]{0.7\linewidth}
		\begin{subfigure}{0.5\linewidth}
			\centering
			\includegraphics[scale=0.27]{./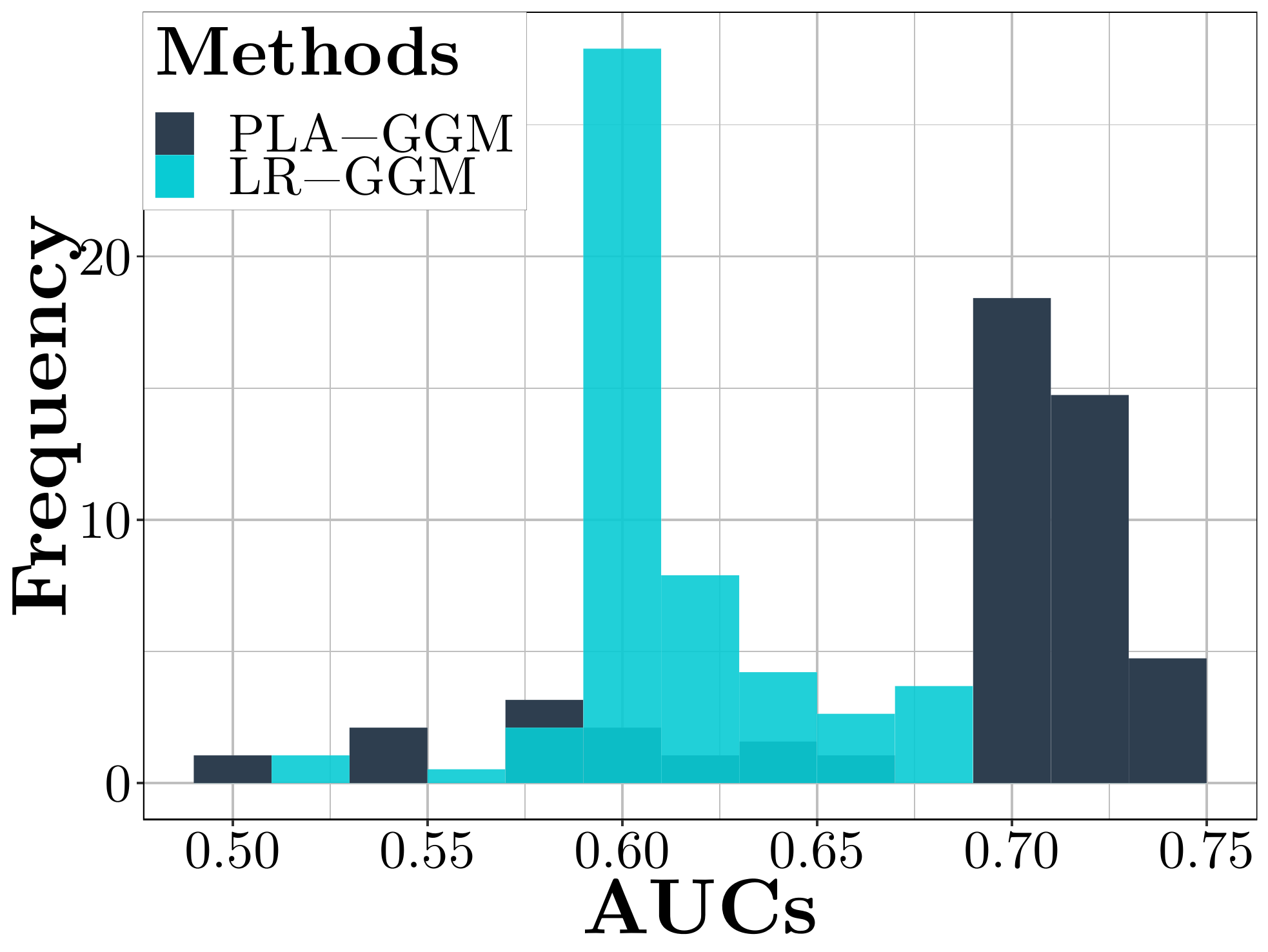}
			\centering
			\caption{Diagnosis using structures.}
			\label{fig:auc-structure-class}
		\end{subfigure}
		\begin{subfigure}{0.5\linewidth}
			\centering
			\includegraphics[scale=0.27]{./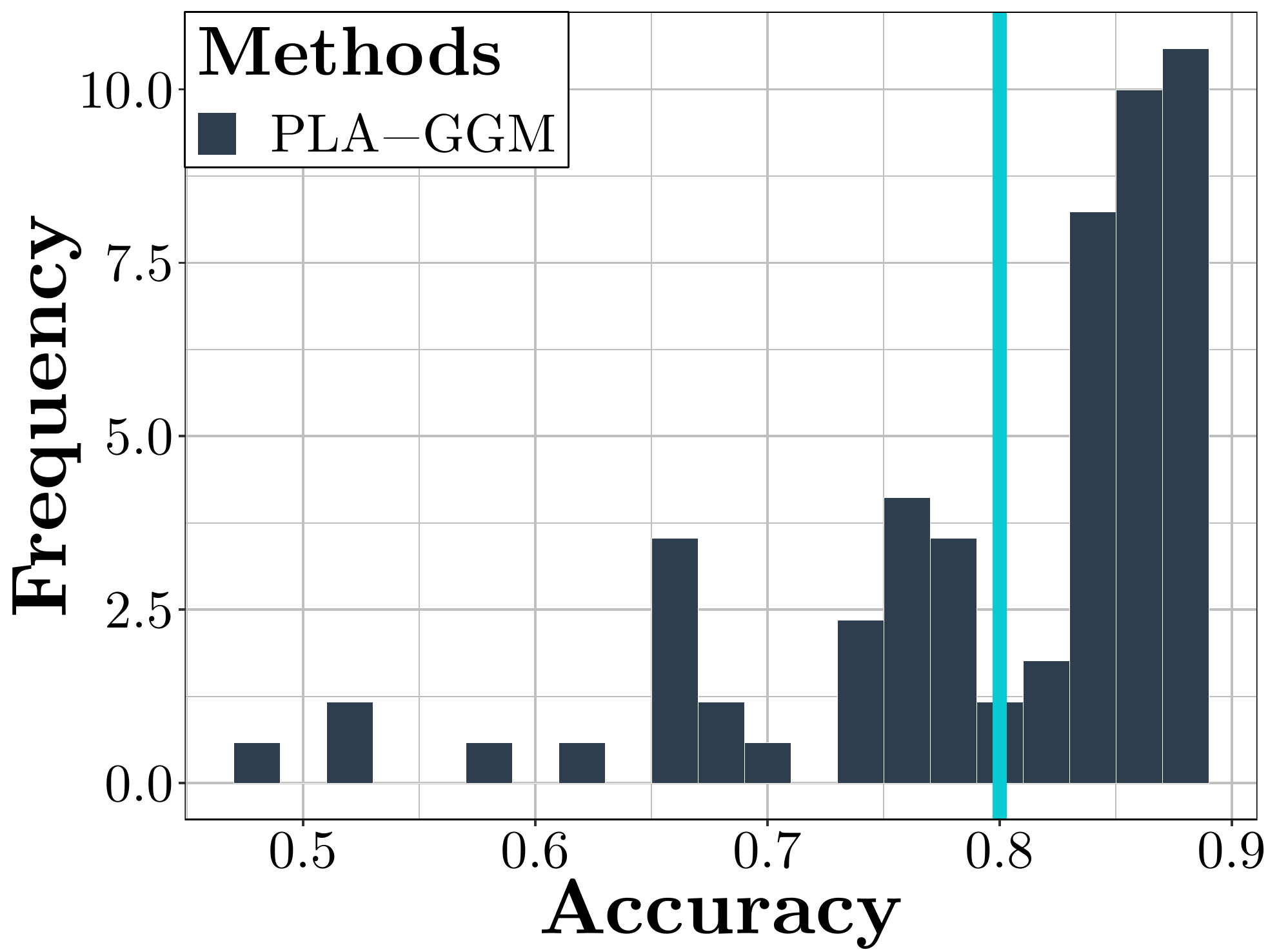}
			\caption{Diagnosis using $\hat{\bOmega}_0$.}
			\label{fig:auc-all}
		\end{subfigure}
		\caption{AUCs for the diagnosis of schizophrenia using only the structures or $\hat{\bOmega}_0$ with different regularization parameters.}
	\end{minipage}
	\vrule
	\hspace{0.2mm}
	\begin{minipage}[t][2mm][b]{0.28\linewidth}
		\centering
		\includegraphics[scale=0.27]{./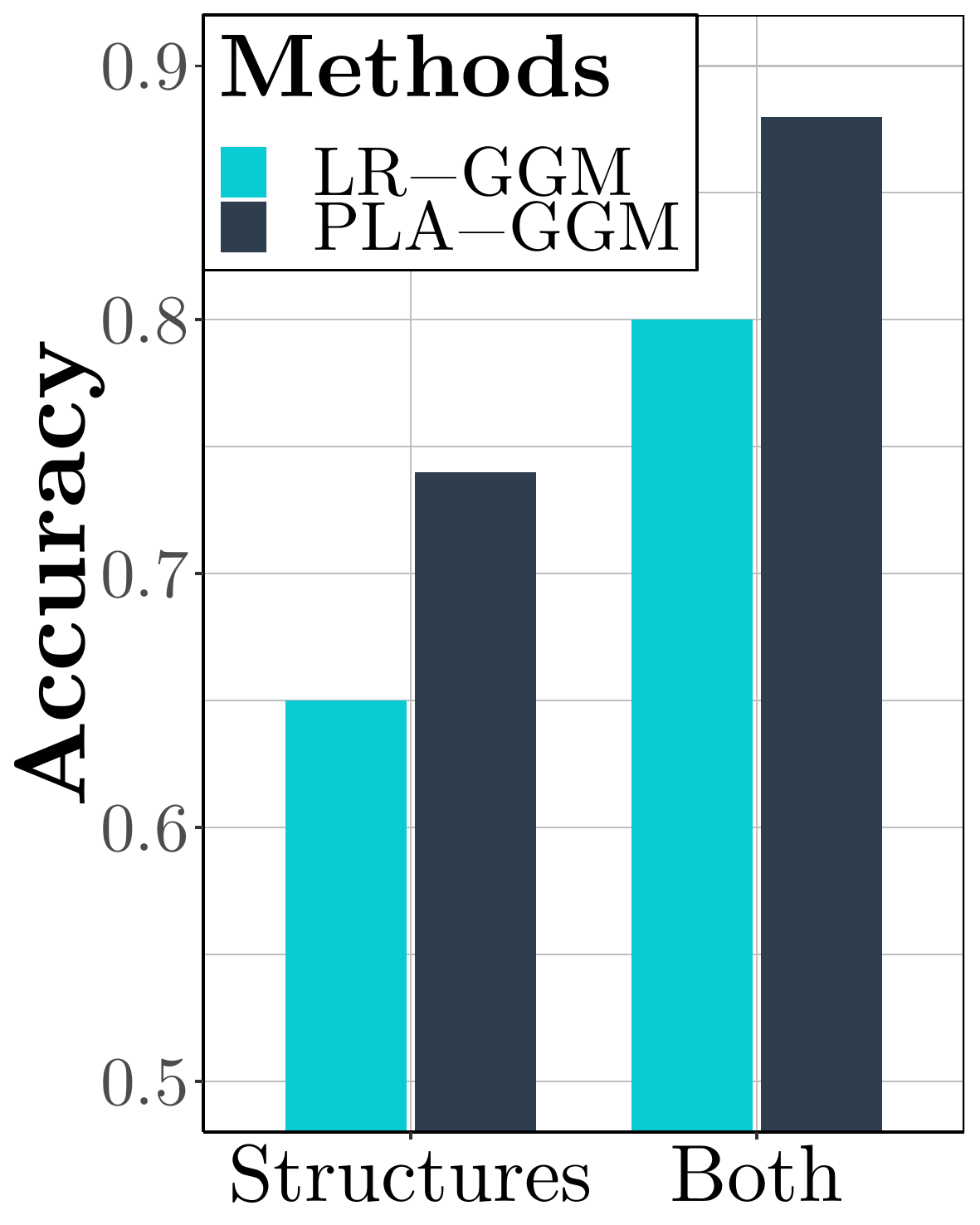}
		\caption{AUCs for the diagnosis of schizophrenia with the regularization parameters selected by AIC.}
		\label{fig:auc-aic}
	\end{minipage}
\vspace{-0cm}
\end{figure*}

\subsubsection*{General Analysis}
\label{sec:general-analysis}
We first generally analyze the brain functional connectivity by the PLA-GGM. Specifically, following the common practice in this area~\citep{belilovsky2016testing}, we assume that all the fMRIs from the subjects with schizophrenia follow a single PLA-GGM with the same brain functional connectivity, and thus combine the preprocessed fMRIs from the subjects into one dataset. Then, the PPL method is applied to the combined dataset to estimate an $\bOmega_0$, which corresponds to the brain functional connectivity for all the subjects with schizophrenia. The same procedure is also implemented on the control subjects' fMRI datsets. 

For a comparison, we also apply the LR-GGM discussed in Section~\ref{sec:competing-method} by the aforementioned procedure. The estimated brain functional connectivity for subjects and the controls with the two methods are reported in Figure~\ref{fig:glass-brain}. The ROIs are denoted by nodes with different colors. Edges among nodes denote the estimated functional connectivity. Red edges denote the positive connections, while the blues ones denote negative connections. The darker the color, the stronger the connection. We only provide the figure from one angle here. The figures from other angles are provided in the Supplements.

 Comparing the glass brain figure for controls with the one for subjects estimated by PLA-GGM, we find Occipital Pole and Central Opercular Cortex are the two areas differ the most. Interestingly, these two
areas have been implicated in the literature as highly
associated with schizophrenia \citep{sheffield2015fronto}. Also, by comparing the results of PLA-GGMs with those of LR-GGMs, the results of LR-GGMs are much denser and covered with lots of positive connections. This phenomenon is consistent with our analysis in Section~\ref{sec:competing-method}: LR-GGMs will treat the average of $\bOmega(g)$'s as the estimator to $\bOmega_0$ and derive incorrect over-dense estimates. This strongly suggests that the dense structure here is a result of the confounders which are not successfully accounted for by the regression.

\subsubsection*{Schizophrenia Diagnosis}
\label{sec:diagnosis}
Ideally, to demonstrate the accuracy of the proposed method for structure recovery, we should compare the estimated brain functional connectivity to the underlying ground truth, which, however, is not available in practice. Therefore, we consider a surrogate evaluation by using the estimated functional connectivity for schizophrenia diagnosis. Intuitively, if the recovered connectivity is more accurate due to effectively omitting the confounding, we should be able to improve schizophrenia diagnosis using the estimated connectivity as features. 
We apply PLA-GGMs and LR-GGMs to each subject respectively, and calculate $\hat{\bOmega}_0$'s for every subject. Then, we use $\hat{\bOmega}_0$'s as the input for classification methods to classify the subjects. For this two-class classification task, we use an $L_1$-regularized logistic regression as this is a common approach in the literature~\citep{patel2016classification}. 

We consider using only the structure (signs without values) of $\hat{\bOmega}_0$'s as the input for the classification. Although the classification is of course more challenging, we can more clearly see how helpful the structure itself is for the diagnosis. The AUCs for the diagnosis are reported in Figure~\ref{fig:auc-structure-class}. For a thorough comparison, we use the regularization parameters suggested by the R package glmnet for the penalized logistic regression and report all the AUCs. Clearly, PLA-GGMs resut in more accurate prediction. Therefore, the brain connectivity estimators derived by PLA-GGMs are more informative for schizophrenia diagnosis and more accurate than those of LR-GGMs. 
 
We next include the values into the input, and evaluate the accuracy for schizophrenia diagnosis. Again, we report all the results using different regularization parameters in Figure~\ref{fig:auc-all}. The green line indicates the best performance of using LR-GGM and $L_1$-regularized logistic regression achieved in \cite{patel2016classification}. Since we are using exactly the same dataset, we directly use their results for the LR-GGM combined with penalized logistic regression for a fair comparison. As a result, for most of the regularization parameters, PLA-GGMs derive more accurate diagnosis than LR-GGMs. Experiments using different $\mathds{1}_{\left\{ \abs{g}\geq g^* \right\}}$ are included in the Supplements. The results are similar. Therefore, we conclude that PLA-GGMs more accurately address confounding and derive more accurate estimators of brain functional connectivity.
 
We note, however, that some specialized alternative classifiers have been developed in the brain connectivity literature~\citep{ patel2016classification,arroyo2017network, andersen2018bayesian}, and expect that our approach will improve performance for those classifiers as well. We leave such further analysis to future work.

\section{Conclusions and Future Works}
We propose PLA-GGMS, to study the relationships among random variables with observed confounders. PLA-GGMs are especially generalized and facilitate $\sqrt{n}$-sparsisent estimators. The utility of PLA-GGMs is demonstrated
using a real-world fMRI dataset for the brain connectivity estimation. While we have been taking GGMs as an example, the results can be generalized to other undirected graphical models, especially the univariate exponential family distributions (UEFDs)~\citep{yang2015graphical}. We leave the details to future work.

\clearpage
\newpage
\bibliographystyle{icml2019}
\bibliography{MRI}

\newpage\clearpage\onecolumn
\newpage
\appendix
\addcontentsline{toc}{section}{Appendices}

\section*{Supplements}

\section{Proof of Theorem~\ref{lem:pl}}
Following the definition, we can derive the PL for the PLA-GGM as:
\begin{align*}
\begin{split}
\ell_{PL} \left(\left\{ \bz_i, g_i \right\}_{i \in [n]}; \bR(\cdot), \bOmega_0 \right)\propto & \sum_{i=1}^n \sum_{j = 1}^p \left\{ z_{ij} \left( \Omega_{ijj} + \sum_{j'\neq j} \Omega_{ijj'}z_{ij}z_{ij'} \right)
-\frac{1}{2}z_{ij}^2  \right.
\\&-\left.\frac{1}{2}\left( \Omega_{ijj} + \sum_{j'\neq j} \Omega_{ijj'}z_{ij}z_{ij'} \right)^2
\right\}. 
\end{split}
\end{align*} 
Then Lemma~\ref{lem:pl} can be proved by the definition of $\bz_{i,-j}$. 
\section{Proof of Lemma~\ref{lem:m}}
According to the analysis in Section~\ref{sec:pl}, we treat the PL as $p$ partially-linear additive linear regressions. Then, for each regression, we can derive $\hat{M}_{ij}$ as the estimation to the smooth part following the rationale in \cite{fan2005profile}. Combining the results for every regression, we can derive Lemma~\ref{lem:m}. 

\section{Proof of Theorem~\ref{thm:sparsistency}}
In this Section, we prove the $\sqrt{n}$-sparsistency of the $L_1$-regularized MPPLE by following the widely-used primal-dual witness proof technique \citep{wainwright2009sharp, ravikumar2010high, yang2011use, yang2015graphical}. PDW is characterized by the following Lemma~\ref{lem:dual-optimal}:

\begin{lemma}
	\label{lem:dual-optimal}
	Let $\hat{\bOmega}_0$ be an optimal solution to \eqref{eq:l1-mpple}, and $\hat{\bZ}$ be the corresponding dual solution. If $\hat{\bZ}$ satisfies  $\norm{\hat{\bZ}_{N}}_\infty < 1$, then any given optimal solution to \eqref{eq:l1-mpple} $\tilde{\bOmega}_0$  satisfies $\tilde{\bOmega}_{0I} = \bm{0}$. Moreover, if $\bH_{SS}$ is positive definite, then the solution to \eqref{eq:l1-mpple} is unique.
\end{lemma}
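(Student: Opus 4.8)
The plan is to prove Lemma~\ref{lem:dual-optimal} as a deterministic consequence of convexity and strict dual feasibility, following the primal--dual witness template of \citet{wainwright2009sharp,ravikumar2010high}, with the probabilistic content (verifying $\norm{\hat{\bZ}_N}_\infty < 1$) deferred to the body of the proof of Theorem~\ref{thm:sparsistency}. Here the objective $F$ is convex --- in fact a convex quadratic, since $\ell_{PPL}$ is a weighted least squares --- so the argument is clean. First I would record the KKT conditions: because $F$ and $\norm{\cdot}$ are convex, $\hat{\bOmega}_0$ solves \eqref{eq:l1-mpple} iff there is a subgradient $\hat{\bZ}\in\partial\norm{\hat{\bOmega}_0}$ with $\grad F(\hat{\bOmega}_0)+\lambda\hat{\bZ}=\bm{0}$; this $\hat{\bZ}$ is the ``dual solution'' of the lemma. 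Since the penalty touches only off-diagonal entries, $\hat{\bZ}$ vanishes on $D$, equals $\operatorname{sign}(\hat{\bOmega}_{0A})$ on $A$, and has $\norm{\hat{\bZ}_N}_\infty\le 1$; the hypothesis upgrades the last inequality to a strict one, and by the witness construction $\hat{\bOmega}_{0N}=\bm{0}$.

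Next, take any optimal $\tilde{\bOmega}_0$ and set $\bDelta:=\tilde{\bOmega}_0-\hat{\bOmega}_0$. The solution set of a convex program is convex and the objective is constant on it, so the convex scalar function $g(t):=F(\hat{\bOmega}_0+t\bDelta)+\lambda\norm{\hat{\bOmega}_0+t\bDelta}$ is constant on $[0,1]$, hence $g'(0^+)=0$. Expanding, $g'(0^+)=\langle\grad F(\hat{\bOmega}_0),\bDelta\rangle+\lambda\big(\langle\operatorname{sign}(\hat{\bOmega}_{0A}),\bDelta_A\rangle+\norm{\bDelta_N}_1\big)$, where the $\norm{\cdot}$-derivative contributes $\norm{\bDelta_N}_1$ precisely because the $N$-block of $\hat{\bOmega}_0$ is zero. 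Substituting $\grad F(\hat{\bOmega}_0)=-\lambda\hat{\bZ}$ and using $\hat{\bZ}_A=\operatorname{sign}(\hat{\bOmega}_{0A})$, $\hat{\bZ}_D=\bm{0}$, all $A$- and $D$-terms cancel and one is left with $0=g'(0^+)/\lambda=\norm{\bDelta_N}_1-\langle\hat{\bZ}_N,\bDelta_N\rangle=\sum_{k\in N}\big(\abs{\Delta_k}-\hat{Z}_k\Delta_k\big)$. Each summand is nonnegative and at least $(1-\abs{\hat{Z}_k})\abs{\Delta_k}$, so strict dual feasibility $\abs{\hat{Z}_k}<1$ forces $\Delta_k=0$ for every $k\in N$; hence $\tilde{\bOmega}_{0N}=\hat{\bOmega}_{0N}=\bm{0}$, which is the first claim (the index set ``$I$'' in the statement being the zero set $N$).

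For uniqueness under $\bH_{SS}\succ 0$: the previous step shows every optimal solution lies in the affine subspace $\{\bOmega_0:\bOmega_{0N}=\bm{0}\}$. On this subspace $F$ is a quadratic whose Hessian is the constant block $\bH_{SS}$, hence strictly convex, and adding the convex term $\lambda\norm{\cdot}$ preserves strict convexity; a strictly convex function has at most one minimizer, so the optimal solution is unique.

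I do not expect a genuine obstacle here --- the lemma is essentially convex analysis, and all statistical difficulty is pushed into the later verification of strict dual feasibility. The only care required is block-wise bookkeeping of $\partial\norm{\cdot}$ (using $\hat{\bOmega}_{0N}=\bm{0}$ to write its contribution as $\norm{\bDelta_N}_1$), justifying $g'(0^+)=0$ from constancy of a convex function on a segment, and noting that because $F$ is quadratic its Hessian is globally constant, so $\bH_{SS}\succ 0$ indeed delivers global uniqueness rather than only a local statement.
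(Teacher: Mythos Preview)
Your proposal is correct and follows the same primal--dual witness template that the paper invokes: the paper's own ``proof'' simply cites Lemma~1 of \citet{wainwright2009sharp}, Lemma~1 of \citet{ravikumar2010high}, and Lemma~2 of \citet{yang2011use}, whereas you have written out the standard convex--analytic argument behind those lemmas in full. One small remark: you assume $\hat{\bZ}_A=\operatorname{sign}(\hat{\bOmega}_{0A})$, i.e.\ that $\hat{\bOmega}_{0k}\neq 0$ for all $k\in A$; this need not hold a priori, but if some $k\in A$ has $\hat{\bOmega}_{0k}=0$ the corresponding contribution to $g'(0^+)$ becomes $\lambda(\abs{\Delta_k}-\hat{Z}_k\Delta_k)\ge 0$, and the conclusion $\bDelta_N=\bm 0$ is unaffected.
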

\begin{proof}
	Specifically, following the same rationale as Lemma 1 in \citealt{wainwright2009sharp},  Lemma 1 in \citealt{ravikumar2010high}, and Lemma 2 in \citealt{yang2011use}, we can derive Lemma \ref{lem:dual-optimal} characterizing the optimal solution of \eqref{eq:l1-mpple}.
\end{proof}

\subsection*{Bound $\norm{\grad F(\bOmega_0^*)}_\infty$}
Before we use the PDW, we first provide a Lemma bounding $\norm{\grad F(\bOmega_0^*)}_\infty$, which has been shown to be vital for PDW~\citep{wainwright2009sharp, ravikumar2010high, yang2011use, yang2015graphical}.

\begin{lemma}
	\label{lem:bound-der}
	Let $r :=  4C_5\lambda$. For any $\epsilon_d >0$, with probability of at least $1-\epsilon_d$, there exists $C_4>0$ and $N_d>0$ satisfying the following two inequalities:
	\begin{equation}
	\label{eq:bound-der-1}
	\norm{\grad F(\bOmega_0^*)}_\infty \leq C_4 \sqrt{\frac{\log p}{n}},
	\end{equation}
	\begin{equation}
	\label{eq:bound-der-2}
	\norm{\tilde{\bTheta}_S - \bTheta^*_S}_\infty \leq r,
	\end{equation}
	for $n>N_d$. 
\end{lemma}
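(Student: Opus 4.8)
The plan is to establish the two bounds separately, with \eqref{eq:bound-der-1} being the real work and \eqref{eq:bound-der-2} following as a corollary. For \eqref{eq:bound-der-1}, I would first write out $\grad F(\bOmega_0^*)$ explicitly from the PPL objective \eqref{eq:ppl}. Since $F$ is (up to scaling) a weighted-least-squares objective in $\bOmega_{0}$, its gradient at the truth $\bOmega_0^*$ has the form of an average over $i,j$ of terms like $\left(\bm{1}_i - \bS_{ij}\right)^\top \bx_j$ multiplied by the residual $\left(\bm{1}_i - \bS_{ij}\right)^\top\left(\by_j - \bx_j\bOmega^*_{0\cdot j}\right)$. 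The key point is that, by Lemma~\ref{lem:m}, at the true parameter this residual is (approximately) $M_{ij} - \hat M_{ij}$ plus the genuine model noise, so $\grad F(\bOmega_0^*)$ decomposes into a \emph{stochastic} part (a mean-zero average of products of sub-Gaussian/sub-exponential quantities — the pseudo-score of the Gaussian conditionals) and a \emph{bias} part coming from the kernel smoothing error in approximating each $M_{ij}$.

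For the stochastic part, I would invoke the Gaussianity of $\bZ\given G=g$ together with Assumption~\ref{asm:f-r} (bounded support of $G$, density bounded away from $0$) to control moments uniformly, then apply a Bernstein-type concentration inequality to each of the $O(p^2)$ coordinates of the gradient and take a union bound; this yields a term of order $\sqrt{\log p / n}$ with probability at least $1-\epsilon_d/2$. For the bias part, I would use standard kernel-regression bias analysis (as in \citet{fan2005profile}, \citet{kolar2010estimating}), which under the smoothness Assumptions~\ref{asm:smooth} and~\ref{asm:f-r} (Lipschitz conditional moments, $\bR(g)$ with continuous second derivative) gives a deterministic bound of order $c_n = \sqrt{-\log h/(nh)} + h^2$ on the smoothing error, hence $O(c_n^2)$ after it propagates quadratically through the profiled least-squares form; Assumption~\ref{asm:order-n-h} then forces $c_n^2 \le C_1\sqrt{\log p/n}$, so the bias is absorbed into the same rate. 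Combining the two parts and relabeling constants gives $\norm{\grad F(\bOmega_0^*)}_\infty \le C_4\sqrt{\log p/n}$ for all $n > N_d$.

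For \eqref{eq:bound-der-2}, recall that in the primal-dual-witness construction $\tilde{\bTheta}$ is the restricted solution obtained by solving the oracle problem on the support $S$ with the off-support entries fixed at zero; its deviation $\tilde{\bTheta}_S - \bTheta^*_S$ is controlled by a standard argument (e.g.\ as in \citet{ravikumar2010high} or \citet{yang2011use}) that expands the oracle stationarity condition, inverts $\bH_{SS}$ using $\norm{\bH_{SS}^{-1}}_\infty \le C_2$ from Assumption~\ref{asm:incoherence}, and bounds the result by a constant times $\norm{\grad F(\bOmega_0^*)}_\infty + \lambda$; since $\lambda$ and $\sqrt{\log p/n}$ are of the same order and $r = 4C_5\lambda$, choosing $C_5$ large enough makes this deviation at most $r$. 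The main obstacle I anticipate is the first one: carefully separating the kernel-smoothing bias from the pseudo-score fluctuation in $\grad F(\bOmega_0^*)$ and verifying that the moment conditions needed for Bernstein's inequality hold uniformly over $i$ and over $g$ in the support — this requires combining the Gaussian tail behavior of $\bZ\given G$ with the uniform boundedness of the kernel weights and of the matrices in Assumption~\ref{asm:smooth}, including the inverse $\left(\EE[\mathds{1}^2 \bZ^\top\bZ\given G=g]\right)^{-1}$, so that $\bS_{ij}$ has well-controlled operator norm. Everything downstream is routine once that bound is in hand.
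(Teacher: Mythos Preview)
Your plan is correct and follows the same high-level architecture as the paper: decompose $\grad F(\bOmega_0^*)$ into a kernel-smoothing bias term (order $c_n^2$, absorbed via Assumption~\ref{asm:order-n-h}) and a stochastic pseudo-score term, then control the restricted oracle deviation for \eqref{eq:bound-der-2}. Two implementation details differ. First, for the stochastic part you propose a generic Bernstein/sub-exponential argument; the paper instead exploits the exact Gaussianity by writing $z_{ij'}\epsilon_{ij} = \tfrac{1}{2}\bigl[(\tfrac{z_{ij'}+\epsilon_{ij}}{\sqrt 2})^2 - (\tfrac{z_{ij'}-\epsilon_{ij}}{\sqrt 2})^2\bigr]$ and applying the Laurent--Massart chi-squared tail bound, which gives slightly cleaner constants but is not essential. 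Second, for \eqref{eq:bound-der-2} you use a direct linear expansion of the oracle stationarity condition, inverting $\bH_{SS}$; this is perfectly valid here because the PPL is exactly quadratic in $\bOmega_0$ (so the Taylor remainder vanishes). The paper instead runs a fixed-point contraction argument on the map $G(\bDelta_S) := -\bH_{SS}^{-1}\bigl[\grad_S F(\bOmega_{0S}^*+\bDelta_S)+\lambda\hat\bZ_S\bigr]+\bDelta_S$, invoking Assumption~\ref{asm:fix} to handle a possibly nonzero remainder $\bm{R}(\bDelta)$; this is strictly more general (it would survive the extension to non-Gaussian exponential families alluded to in the conclusion), but for the Gaussian PPL your simpler route suffices.
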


\begin{proof} 
	We prove \eqref{eq:bound-der-1} and \eqref{eq:bound-der-2} in turn. 
	
	\subsubsection*{Proof of \eqref{eq:bound-der-1}}
	To begin with, we prove \eqref{eq:bound-der-1}. We define 
	
	\begin{equation*}
	\lambda_{ij}^* = \left( \bm{1}_{i} - \bS_{ij}\right)^\top \bx_j \bOmega^*_{0 \cdot j}.
	\end{equation*}
	We use the $F$ to denote the PPL defined in Definition~\ref{def:ppl}. 
	Then, the derivative of $F(\bOmega^*_{0 \cdot j})$ is:
	\begin{align}
	\label{eq:der-jk}
	\begin{split}
	\frac{\partial F(\bOmega^*_{0})}{\partial  \Omega^*_{0 j' j}} =& \frac{\sum_{i=1}^n\left\{
		-\left( \bm{1}_{i} - \bS_{ij}\right)^\top\by_j\left[\left( \bm{1}_{i} - \bS_{ij}\right)^\top \bx_j 
		\right]_{j'} +\lambda_{ij}^*\left[\left( \bm{1}_{i} - \bS_{ij}\right)^\top \bx_j 
		\right]_{j'}\right\}}{n}
	\\ &+\frac{\sum_{i=1}^n\left\{	-\left( \bm{1}_{i} - \bS_{ij'}\right)^\top\by_{j'}\left[\left( \bm{1}_{i} - \bS_{ij'}\right)^\top \bx_{j'} 
		\right]_{j} +\lambda_{ij'}^*\left[\left( \bm{1}_{i} - \bS_{ij}\right)^\top \bx_{j'} 
		\right]_{j}\right\}}{n},
	\end{split}
	\end{align}
	where $[\cdot]_j$ denotes the $j^\text{th}$ component of the vector.
	For the ease of presentation, we define
	\begin{equation*}
	\by_{j}' =
	\begin{bmatrix}
	\left( \bm{1}_{1} - \bS_{1j}\right)^\top\by_j
	\\\vdots
	\\ \left( \bm{1}_{n} - \bS_{nj}\right)^\top\by_j
	\end{bmatrix}
	\text{ and }
	\bx_{j}' =
	\begin{bmatrix}
	\left( \bm{1}_{1} - \bS_{1j}\right)^\top\bx_{j}
	\\\vdots
	\\ \left( \bm{1}_{n} - \bS_{nj}\right)^\top\bx_{j}
	\end{bmatrix}.
	\end{equation*}
	Then, we consider 
	\begin{equation}
	\label{eq:vector-regression}
	\frac{{\bx_j'}^\top \bx_j'\bOmega_{0\cdot j}^*-{\bx_j'}^\top\by_{j}'}{n}, 
	\end{equation}
	whose $j'^{\text{th}}$ component is just the target value
	\begin{equation*}
	\frac{\sum_{i=1}^n\left\{
		-\left( \bm{1}_{i} - \bS_{ij}\right)^\top\by_j\left[\left( \bm{1}_{i} - \bS_{ij}\right)^\top \bx_j 
		\right]_{j'} +\lambda_{ij}^*\left[\left( \bm{1}_{i} - \bS_{ij}\right)^\top \bx_j 
		\right]_{j'}\right\}}{n}.
	\end{equation*}
	
	Therefore, we focus on bounding \eqref{eq:vector-regression}. Then, 
	\begin{align*}
	\begin{split}
	\frac{{\bx_j'}^\top \bx_j'\bOmega_{0\cdot j}^*-{\bx_j'}^\top\by_{j}'}{n} =& \frac{{\bx_j'}^\top \bx_j' \left[ \bOmega_{0\cdot j}^* -  \left( {\bx_j'}^\top \bx_j' \right)^{-1}{\bx_j'}^\top\by_{j}' \right]}{n} 
	\\ = & \frac{{\bx_j'}^\top
		\left(\mathbf{I} - \bS_j  \right)(\bM_j + \bepsilon_j)
	}{n}
	\end{split},
	\end{align*}
	where the second equality is due to Lemma~\ref{lem:node-wise-normal}. 
	
	We fist study $\frac{{\bX_j'}^\top
		\left(\mathbf{I} - \bS_j  \right)\bM_j}{n}$: according ot Lemma~\ref{lem:bound-uniform}, for any $\epsilon_a > 0$, there exists $\delta_a>0$ and $N_a>0$ satisfying
	\begin{equation*}
	\text{P}\left\{ \norm{\frac{{\bx_j'}^\top
			\left(\mathbf{I} - \bS_j  \right)\bM_j}{n}   
	}_\infty
	> \delta_a\left[ \frac{\log(\frac{1}{h})}{nh} + h^4 
	+ 2h^2 \sqrt{\frac{\log(\frac{1}{h})}{nh} }
	\right] 
	\right\} < \epsilon_a,
	\end{equation*}
	with $n > N_a$. 
	
	According to Assumption~\ref{asm:order-n-h}
	\begin{equation}
	\label{eq:bound-re-1}
	\text{P}\left\{ \norm{\frac{{\bx_j'}^\top
			\left(\mathbf{I} - \bS_j  \right)\bM_j}{n}   
	}_\infty
	> \delta_aC_1 \sqrt{\frac{\log p}{n}}
	\right\} < \epsilon_a.
	\end{equation}

	Now, we study $\frac{{\bx_j'}^\top
		\left(\mathbf{I} - \bS_j  \right)\bepsilon_j}{n}$. According to Lemma~\ref{lem:bound-uniform-epsilon}, we have 
	\begin{align*}
	\frac{{\bx_j'}^\top
		\left(\mathbf{I} - \bS_j  \right)\bepsilon_j}{n}
	= \sum_{i=1}^n \left\{
	\bx_{ij} -\mathbb{E}^\top\left[ \mathds{1}_{g_{i'} > g^*}\bZ_{i,-j} \bZ_{i,-j}^\top \given g_i \right] \mathbb{E}^{-1}\left[ \mathds{1}^2_{g_{i'} > g^*}\bZ_{i,-j} \bZ_{i,-j}^\top \given g_i \right]\tilde{\bx}_{ij}
	\right\}
	\\ \epsilon_{ij}(1+ o_p(1))/n,
	\end{align*}
	uniformly for $j$.
	Note that $\frac{{\bx_j'}^\top
		\left(\mathbf{I} - \bS_j  \right)\bepsilon_j}{n}$is a $p \times 1$ vector. Therefore, for the ${j'}^\text{th}$ component, we have
	\begin{align}
	\label{eq:bound-component}
	\begin{split}
	&\abs{\left[ \frac{{\bx_j'}^\top
			\left(\mathbf{I} - \bS_j  \right)\bepsilon_j}{n}  \right]_{j'}}
	\\\leq& \abs{ \sum_{i:g^i \leq g^*} \left(1-\mathds{1}^2_{g^i>g^*}  \right) 
		z_{ij'}\epsilon_{ij} 	
	} 
	\left(1+ \abs{o_p(1)}  \right)/n
	\\ =&\frac{1}{2n} \abs{\sum_{i=1}\left(1- \mathds{1}^2_{g_i > g^*}   \right)\left[ \left( \frac{z_{ij'}+ \epsilon_{ij}}{\sqrt{2}} \right)^2 -1-
		\left( \frac{ z_{ij'}-\epsilon_{ij}}{\sqrt{2}} \right)^2 +1   \right] }(1+ \abs{o_p(1)})
	\end{split}.
	\end{align}
	It can be shown that $\left( \frac{z_{ij'}+ \epsilon_{ij}}{\sqrt{2}} \right)^2 $ and $
	\left( \frac{ z_{ij'}-\epsilon_{ij}}{\sqrt{2}} \right)^2$ are independent and follow chi-squared distribution with degree equal to $1$.
	
	By Lemma 1 in \cite{laurent2000adaptive}, the linear combinition of chi-squared random variables satisfies:
	\begin{equation*}
	\text{P} \left\{ \sum_{i=1}\left(1- \mathds{1}^2_{g_i > g^*}   \right)\left[ \left( \frac{z_{ij'}+ \epsilon_{ij}}{\sqrt{2}} \right)^2 -1   \right] \geq 2 \sqrt{nx} + 2\epsilon_c  \right\} \leq \exp(-\epsilon_c),
	\end{equation*}
	
	\begin{equation*}
	\text{P} \left\{ \sum_{i=1}\left(1- \mathds{1}^2_{g_i > g^*}   \right)\left[ \left( \frac{z_{ij'}+ \epsilon_{ij}}{\sqrt{2}} \right)^2 -1   \right] \leq -2 \sqrt{nx}  \right\} \leq \exp(-\epsilon_c),
	\end{equation*}
	
	\begin{equation*}
	\text{P} \left\{- \sum_{i=1}\left(1- \mathds{1}^2_{g_i > g^*}   \right)\left[ \left( \frac{z_{ij'}- \epsilon_{ij}}{\sqrt{2}} \right)^2 -1   \right] \leq -2 \sqrt{nx} -2\epsilon_c \right\} \leq \exp(-\epsilon_c),
	\end{equation*}
	and
	\begin{equation*}
	\text{P} \left\{- \sum_{i=1}\left(1- \mathds{1}^2_{g_i > g^*}   \right)\left[ \left( \frac{z_{ij'}- \epsilon_{ij}}{\sqrt{2}} \right)^2 -1   \right] \geq 2 \sqrt{nx} \right\} \leq \exp(-\epsilon_c),
	\end{equation*}
	for any $\epsilon_c>0$.
	Combing the previous four probabilistic bounds, we can derive
	\begin{align}
	\label{eq:bound-rhs}
	\begin{split}
	\text{P}\left\{ \sum_{i=1}\left(1- \mathds{1}^2_{g_i > g^*}   \right)\left[ \left( \frac{z_{ij'}+ \epsilon_{ij}}{\sqrt{2}} \right)^2 -1   \right]   \right.&    
	\\ - \sum_{i=1}\left(1- \mathds{1}^2_{g_i > g^*}   \right) &\left. \left[ \left( \frac{z_{ij'}- \epsilon_{ij}}{\sqrt{2}} \right)^2 -1   \right] \geq 4\sqrt{n\epsilon} + 2\epsilon_c
	\right\} \leq \exp(-2\epsilon_c)
	\end{split}
	\end{align} 
	and 
	\begin{align}
	\label{eq:bound-lhs}
	\begin{split}
	\text{P}\left\{ \sum_{i=1}\left(1- \mathds{1}^2_{g_i > g^*}   \right)\left[ \left( \frac{z_{ij'}+ \epsilon_{ij}}{\sqrt{2}} \right)^2 -1   \right]   \right.&    
	\\ - \sum_{i=1}\left(1- \mathds{1}^2_{g_i > g^*}   \right) &\left. \left[ \left( \frac{z_{ij'}- \epsilon_{ij}}{\sqrt{2}} \right)^2 -1   \right] \leq -4\sqrt{n\epsilon} - 2\epsilon_c
	\right\} \leq \exp(-2\epsilon_c)
	\end{split}.
	\end{align}
	Taking \eqref{eq:bound-rhs} and \eqref{eq:bound-lhs} into \eqref{eq:bound-component}, we can derive 
	\begin{align}
	\label{eq:bound-prob-1}
	\begin{split}
	\text{P} \left\{\abs{\left[ \frac{{\bX_j'}^\top
			\left(\mathbf{I} - \bS_j  \right)\bepsilon_j}{n}  \right]_{j'}} \geq \left(2\sqrt{\frac{\epsilon_c}{n}} + \frac{\epsilon_c}{n}\right)(1+ \abs{o_p(1)})
	\right\} \leq 2\exp(-2\epsilon_c)
	\end{split}.
	\end{align}
	
	Then, by the definition of $o_p(1)$, for any $\epsilon_b>0$, there exists $N_b$ so that for $n>N_b$:
	\begin{equation}
	\label{eq:bound-prob-2}
	\text{\normalfont P}\left\{ \abs{o_p(1)} \geq 1 \right\} \leq \epsilon_b.
	\end{equation}

	Combining \eqref{eq:bound-prob-1} and \eqref{eq:bound-prob-2}, we derive
	\begin{align}
	\label{eq:bound-re-2}
	\begin{split}
	\text{P} \left\{\norm{\frac{{\bX_j'}^\top
			\left(\mathbf{I} - \bS_j  \right)\bM_j}{n}   
	}_\infty \geq \left(4\sqrt{\frac{\epsilon_c}{n}} + 2\frac{\epsilon_c}{n}\right)
	\right\} \leq 2p\exp(-2\epsilon_c) + \epsilon_b
	\end{split},
	\end{align}
	by a union bound. 
	Eventually, according to \eqref{eq:bound-re-1} and \eqref{eq:bound-re-2}, and by setting $\epsilon_c = 2 \log p$ we prove:
	\begin{equation*}
	\norm{
		\frac{{\bx_j'}^\top \bx_j'\bOmega_{0\cdot j}^*-{\bx_j'}^\top\by_{j}'}{n}}_\infty \leq (6+ \delta_a C_1)\sqrt{\frac{2\log p}{n}},
	\end{equation*}
	with probability larger than $1 - \epsilon_b - \epsilon_a - 2p^{-1}$.
	Thus, for any $\epsilon_d>0$, there exists $C_4>0$ and $N_d >0$
	\begin{equation*}
	\norm{\grad F(\bTheta^*)}_\infty \leq C_4\sqrt{\frac{\log p}{n}},
	\end{equation*}
	with probability larger than $1 - \epsilon_d$, for $n>N_d$.
	\subsubsection*{Proof of \eqref{eq:bound-der-2}}
	
	To prove \eqref{eq:bound-der-2}, we use the fixed point method by defining a map $G(\bm{\Delta}_S) := - \bH_{SS}^{-1} \left[ \grad_S F(\bOmega^*_{0S} + \bm{\Delta}_S )+ \lambda \hat{\bZ}_{S} \right] + \bm{\Delta}_{S}$.
	If $\norm{\bm{\Delta}}_\infty \leq r$, by Taylor expansion of $\grad_S F(\bOmega_0^* + \bm{\Delta})$ centered at $\grad_SF(\bOmega_0^*)$,
	\begin{align*}
	\Norm{G(\bm{\Delta}_{S})}_\infty \hspace{-2mm} = & \Norm{- \bH_{SS}^{-1} \left[ \grad_S F(\bOmega^*_{0S} ) + \bH_{SS} \bDelta_S + \bm{R}_S(\bDelta) + \lambda \hat{\bZ}_{S} \right] \hspace{-1mm} + \hspace{-1mm} \bm{\Delta}_{S} }_\infty \hspace{-2mm} 
	\\=& \Norm{ - \bH_{SS}^{-1} \left( \grad_S F(\bOmega^*_{0S}) + \bm{R}_S(\bm{\Delta}) +\lambda \hat{\bZ}_{S} \right)}_\infty\\
	\leq & \Norm{\bH_{SS}^{-1}}_\infty \left(\norm{\grad_S F(\bOmega^*_{0S}) }_\infty + \norm{R_S(\bm{\Delta})}_\infty + \lambda \norm{\hat{\bZ}_{S}}_\infty \right)
	\\	\leq & C_2(\lambda+C_3r^2+\lambda) = C_2C_3r^2 + 2C_2\lambda,
	\end{align*}
	where the inequality is due to Assumption~\ref{asm:incoherence} and Assumption~\ref{asm:fix}, and $\norm{\grad_S F(\bTheta^* )}_\infty \leq \lambda$ with a high probability, according to \eqref{eq:bound-der-1}. Then, based on the definition of $r$, we can derive the upper bound of $\Norm{G(\bm{\Delta}_{S})}_\infty$ as $\Norm{G(\bm{\Delta}_{S})}_\infty \leq r/2 + r/2 = r$.
	
	Therefore, according to the fixed point theorem \citep{ortega2000iterative, yang2011use}, there exists $\bm{\Delta}_S$ satisfying  $G(\bm{\Delta}_S) = \bm{\Delta}_S$, which indicates $\grad_S F(\bOmega_0^* + \bm{\Delta} ) + \lambda \hat{\bZ}_{S} = \bm{0}$.
	The optimal solution to \eqref{eq:restricted} is unique, and thus 
	$\tilde{\bm{\Delta}}_{S} = \bm{\Delta}_S$. Therefore, $\norm{\tilde{\bm{\Delta}}_S}_\infty \leq r$ , with probability larger than $1-\epsilon$.
\end{proof}

\subsection*{PDW}
\label{sec:main-proof}
By Lemma \ref{lem:dual-optimal}, we can prove the sparsistency by building an optimal solution to \eqref{eq:l1-mpple} satisfying the strict dual feasibility (SDF) defined as $\norm{\hat{\bZ}_N}_\infty <1$, which is summarized. Therefore, we now build a solution by solving a restricted problem.

\subsubsection*{Solve a Restricted Problem}
First of all, we derive the KKT condition of \eqref{eq:l1-mpple}:
\begin{equation}
\label{eq:kkt}
\grad F(\hat{\bOmega}_0) + \lambda \hat{\bZ} = \bm{0}.
\end{equation}

To construct an optimal primal-dual pair solution, we define $\tilde{\bOmega}_0$ as an optimal solution to the restricted problem:
\begin{align}
\label{eq:restricted}
\begin{split}
\tilde{\bOmega}_0:= \min_{\bOmega_0}  F(\bOmega_0)+\lambda \norm{\bOmega_0}_1,
\end{split}
\end{align}
with ${\bOmega_0}_N = \mathbf{0}$. $\tilde{\bOmega}_0$ is unique due to Lemma \ref{lem:dual-optimal}.  Then, we define the subgradient  corresponding to $\tilde{\bOmega}_0$  as $\tilde{\bZ}$. Therefore, $(\tilde{\bOmega}_0,\tilde{\bZ})$ is a pair of optimal solutions to the restricted problem \eqref{eq:restricted}. $\tilde{\bZ}_S$ is determined according to the values of $\tilde{\bOmega}_{0S}$ via the KKT conditions of \eqref{eq:restricted}. Thus we have
\begin{equation}
\label{eq:kkt-s}
\grad_S F(\tilde{\bTheta}) + \lambda \tilde{\bZ}_S = \bm{0},
\end{equation}
where $\grad_S$ represents the gradient components with respect to $S$. Letting $\hat{\bOmega}_0 = \tilde{\bOmega}_0$, we determine $\tilde{\bZ}_N$ according to \eqref{eq:kkt}. It now remains to show that $\tilde{\bZ}_N$ satisfies SDF.

\subsubsection*{SDF}
Now, we demonstrate that $\tilde{\bTheta}$ and $\tilde{\bZ}$ satisfy SDF. We define $\tilde{\bm{\Delta}} := \tilde{\bTheta} - \bTheta^*$. By \eqref{eq:kkt-s}, and by the Taylor expansion of $\grad_S F(\tilde{\bOmega}_0)$, we have that 
\begin{equation*}
\bH_{SS} \tilde{\bm{\Delta}}_{S} + \grad_S F(\bOmega^*_0) + \bm{R}_S(\tilde{\bm{\Delta}}) +\lambda \tilde{\bZ}_{S} = \bm{0},
\end{equation*} 
which means 
\begin{equation}
\label{eq:tilde-delta-s} \tilde{\bm{\Delta}}_{S} =\bH_{SS}^{-1} \left[-\grad_S F(\bOmega_0^*) - \bm{R}_S(\bm{\tilde{\Delta}}) - \lambda \tilde{\bZ}_{S}\right],
\end{equation}
where $\bH_{SS}$ is positive definite and hence invertible. 

By the definition of $\tilde{\bOmega}_0$ and $\tilde{\bZ}$,
\begin{equation}
\label{eq:delta}
\grad F(\tilde{\bOmega}_0) + \lambda \tilde{\bZ} = \mathbf{0} \Rightarrow \grad F(\bOmega_0^*) + \bH \tilde{\bDelta} + \bm{R}(\tilde{\bOmega}_0) + \lambda\tilde{\bZ} = \bm{0} \Rightarrow \grad_N F(\tilde{\bTheta})  + \bH_{NS} \tilde{\bDelta}_S+ \bm{R}_N(\tilde{\bDelta}) + \lambda \tilde{\bZ}_N=\bm{0}.
\end{equation}

Due to \eqref{eq:tilde-delta-s}, 
\begin{align*}
\begin{split}
\lambda \norm{\tilde{\bZ}_{N}}_\infty 
= & \norm{-\bH_{NS} \tilde{\bm{\Delta}}_{S}- \grad_N F(\bOmega_0^*) -\bm{R}_N(\tilde{\bm{\Delta}})}_\infty \nonumber \\
\leq &\Norm{\bH_{NS}   \bH_{SS}^{-1} \left[-\grad_S F(\bOmega_0^*) - \bm{R}_S(\tilde{\bm{\Delta}} )- \lambda \tilde{\bZ}_{S}\right]}_\infty + \norm{\grad_N F(\bOmega_0^*) + \bm{R}_N(\tilde{\bm{\Delta}})}_\infty
\nonumber \\
\le & \Norm{\bH_{NS}   \bH_{SS}^{-1} }_{\infty} \Norm{\grad_S F(\bOmega_0^*) + \bm{R}_S(\tilde{\bm{\Delta}})}_{\infty} + \Norm{\bH_{NS} \bH_{SS}^{-1} }_{\infty} \Norm{\lambda \tilde{\bZ}_{S}}_{\infty} + \norm{\grad_N F(\bOmega_0^*) + \bm{R}_N(\tilde{\bm{\Delta}})}_\infty \nonumber 
\end{split}.
\end{align*}
Further, we use the Assumption~\ref{asm:incoherence}, 
\begin{align}
\label{eq:tilde-z-i-1}
\lambda \norm{\tilde{\bZ}_{N}}_\infty 
\le & (1-\alpha) \left(  \norm{\grad_S F(\bOmega_0^*)}_\infty+\norm{\bm{R}_S(\tilde{\bm{\Delta}})}_\infty  \right) + (1-\alpha)\lambda + \left(  \norm{\grad_N F(\bOmega_0^*)}_\infty + \norm{\bm{R}_N (\tilde{\bm{\Delta}})}_\infty  \right) \nonumber \\
\leq & (2-\alpha) \left(  \norm{\grad F(\bOmega_0^*)}_\infty+\norm{\bm{R}(\tilde{\bm{\Delta}})}_\infty  \right) + (1-\alpha)\lambda,
\end{align}
where we have used  in the first inequality, and the third inequality is due to Assumption \ref{asm:incoherence}. 

Now, we study $\norm{\grad F(\bOmega_0^*)}_\infty$.By Lemma \ref{lem:bound-der} and the assumption on $\lambda$ in Theorem~\ref{thm:sparsistency},  $\norm{\grad F(\bTheta^*)}_\infty \leq \frac{\alpha C_4}{4} \sqrt{\frac{\log p}{n}}\leq \frac{\alpha \lambda}{4} $, with probability larger than $1-\epsilon_d$.  

It remains to control $\norm{\bm{R}(\tilde{\bm{\Delta}})}_\infty$. According to Assumption~\ref{asm:fix} and Lemma~\ref{lem:bound-der}, 
\begin{equation}
\label{eq:r-tilde-delta-inf}
\norm{\bm{R}(\tilde{\bm{\Delta}})}_\infty
\leq C_3 \norm{\bDelta}_\infty^2 \leq  C_3 r^2 \leq C_3 (4C_2\lambda)^2 = \lambda \frac{64C_2^2 C_3}{\alpha} \frac{\alpha \lambda}{4} \le \left(C_5 \sqrt{\frac{\log p}{n}}\right) \frac{64C_2^2 C_3}{\alpha}  \frac{\alpha \lambda}{4},
\end{equation}
where in the last inequality we have used the assumption $\lambda \leq C_5\sqrt{\frac{\log p}{n}}$ in Theorem~\ref{thm:sparsistency}. Therefore, when we choose $n \ge \left(64C_5  C_2^2C_3/\alpha\right)^2\log p$ in Theorem~1, from \eqref{eq:r-tilde-delta-inf}, we can conclude that $\norm{\bm{R}(\tilde{\bm{\Delta}})}_\infty \leq \frac{\alpha \lambda }{4}$. As a result, $\lambda \norm{\hat{\bZ}_{N}}_\infty $ can be bounded by $\lambda \norm{\tilde{\bZ}_{N}}_\infty   < \alpha\lambda/2+\alpha\lambda/2 +(1-\alpha)\lambda = \lambda$.
Combined with Lemma \ref{lem:dual-optimal}, we demonstrate that any optimal solution of \eqref{eq:l1-mpple} satisfies $\tilde{\bTheta}_{N} = \bm{0}$. Furthermore, \eqref{eq:bound-der-2} controls the difference between the optimal solution of \eqref{eq:l1-mpple} and the real parameter by $\norm{\tilde{\bm{\Delta}}_S}_\infty \leq r$, by the fact that $r \le \norm{\bTheta^*_S}_\infty$ in Theorem~\ref{thm:sparsistency}, $\hat{\bTheta}_S$ shares the same sign with $\bTheta^*_S$. 

\subsection*{Auxiliary Lemmas}
In this section, we provide and prove the used auxiliary lemmas. 
\begin{lemma}
	\label{lem:node-wise-normal}
	For the graphical model defined in Section~\ref{sec:model} parameterized by $\bOmega_0^*$, the conditional distribution of $Z_{ij}$ follows
	\begin{equation*}
	\left(Z_{ij}
	\given G_i = g_i \right)\sim \bZ_{i,-j}^\top \bOmega_{0\cdot j} +M_{ij}+ \epsilon_{ij},
	\end{equation*}
	where 
	\begin{equation*}
	\left[\bZ_{i,-j} \right]_{j'}=
	\begin{cases}
	Z_{ij'} & j'\neq j
	\\1 & j'=j
	\end{cases}.
	\end{equation*}
	$\epsilon_{ij}$'s follow the standard normal distribution, and $\epsilon_{ij}$ is independent with $\epsilon_{i'j}$ for $j\neq j' \in [p]$. 
\end{lemma}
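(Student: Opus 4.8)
The assertion is precisely the node-conditional characterization of a Gaussian graphical model, read off the joint conditional law $\bZ \given G = g$ specified in Section~\ref{sec:model}. The plan is therefore purely computational: fix an observation index $i$ with $G_i = g_i$ and a coordinate $j \in [p]$, determine the conditional distribution of $Z_{ij}$ given the remaining coordinates $\bZ_{i,-j}$ directly from the exponential-family density, and then identify the resulting conditional mean with the quantity appearing in the statement.

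Concretely, starting from $\text{P}(\bZ = \bz; \bOmega(g_i)\given G = g_i) \propto \exp\{\sum_k \Omega_{kk}(g_i) z_k + \sum_{k<k'}\Omega_{kk'}(g_i) z_k z_{k'} - \frac12\sum_k z_k^2\}$, I would collect exactly the terms depending on $z_{ij}$: the linear term $\Omega_{jj}(g_i) z_{ij}$, the bilinear terms $\sum_{j'\ne j}\Omega_{jj'}(g_i)\, z_{ij} z_{ij'}$ — where both the $j'>j$ and $j'<j$ parts of the ``$k<k'$'' sum contribute and one uses the symmetry $\Omega_{jj'}(g_i)=\Omega_{j'j}(g_i)$ to merge them — and the single quadratic term $-\frac12 z_{ij}^2$. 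Since this is the only source of $z_{ij}^2$, the conditional log-density is $-\frac12 z_{ij}^2 + m_{ij} z_{ij} + \text{const} = -\frac12 (z_{ij}-m_{ij})^2 + \text{const}$ in $z_{ij}$, with $m_{ij} := \Omega_{jj}(g_i) + \sum_{j'\ne j}\Omega_{jj'}(g_i) z_{ij'}$, so that $Z_{ij}\given \bZ_{i,-j}=\bz_{i,-j},\, G_i = g_i \sim \mathcal{N}(m_{ij}, 1)$. By the definition of $\bz_{i,-j}$ (its $j$-th entry replaced by $1$) one has $m_{ij} = \bz_{i,-j}^\top\bOmega_{i\cdot j}$, and the additive structure $\bOmega(g_i) = \bOmega_0 + \bR(g_i)$ of \eqref{eq:model:precision} splits this as $m_{ij} = \bz_{i,-j}^\top\bOmega_{0\cdot j} + M_{ij}$, the second summand being the contribution carried by $\bR(g_i)$, i.e. $M_{ij}$. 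Setting $\epsilon_{ij} := Z_{ij} - m_{ij}$ yields the stated additive representation; since the conditional variance is $1$ regardless of $g_i$ and of $\bz_{i,-j}$, $\epsilon_{ij}$ is standard normal both conditionally and marginally.

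The independence claim follows from the sampling model alone: the pairs $\{(\bz_i, g_i)\}_{i\in[n]}$ are drawn independently, so residuals attached to distinct observations are mutually independent. There is no genuine analytical obstacle; the only point requiring care is the bookkeeping in the second step — correctly accounting for the symmetry of $\bOmega(g_i)$ when extracting the cross terms, and then matching the conditional mean $\bz_{i,-j}^\top\bOmega_{i\cdot j}$ to the decomposition into the target part $\bz_{i,-j}^\top\bOmega_{0\cdot j}$ and the nuisance part $M_{ij}$.
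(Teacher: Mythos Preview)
Your argument is correct and is precisely the computation the paper has in mind: its own proof simply points to Lemma~\ref{lem:pl} and notes that the node-wise conditional in a PLA-GGM is Gaussian, which is exactly the completing-the-square calculation you carry out explicitly. Your decomposition of the conditional mean into $\bz_{i,-j}^\top\bOmega_{0\cdot j}$ plus the nuisance part $M_{ij}$, and your reading of the independence claim as independence across observations $i$, match how the lemma is subsequently used in the proof of Lemma~\ref{lem:bound-der}.
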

\begin{proof}
	According to Lemma~\ref{lem:pl}, the node-wise conditional distribution of a PLA-GGM follows a Gaussian distribution. Then, Lemma~\ref{lem:node-wise-normal} can be proved.
\end{proof}

\begin{lemma}
	\label{lem:kernel}
	For a kernel regression on $\left\{x_i,y_i \right\}_{i=1}^n$ as the IID samples of $(X,Y)$. Assume that $\EE \abs{Y}^s < \infty$ and $\sup_X \in \abs{Y}^s f(X,Y)dY \leq \infty$. Given that $n^{2\epsilon -1}h \to \infty$ for $\epsilon < 1-s^{-1}$, we have
	
	\begin{equation*}
	\sum_x\abs{\frac{1}{n} \sum_{i=1}^n \left[K_h(x_i - x) -\EE\left\{ K_h(x_i-x)y_i   \right\}  \right]   
	}
	=O_p\left( \left\{ \frac{\log(1/h)}{nh}   
	\right\}^{1/2} \right).
	\end{equation*}
\end{lemma}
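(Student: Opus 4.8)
The plan is to run the classical discretization argument for uniform consistency of kernel estimators, in the spirit of \citealt{mack1982weak}: truncate the possibly heavy-tailed responses, bound the discarded part by a moment inequality and a union bound, control the truncated part on a fine deterministic grid via Bernstein's inequality, and pass from the grid to the whole support using the Lipschitz continuity of $K$. Write $T_n(x):=\frac1n\sum_{i=1}^n\bigl[K_h(x_i-x)y_i-\EE\{K_h(x_1-x)y_1\}\bigr]$ and $\rho_n:=\bigl\{\log(1/h)/(nh)\bigr\}^{1/2}$, so that the claim amounts to the uniform bound $\sup_x\abs{T_n(x)}=O_p(\rho_n)$; recall $X$ has bounded support and assume, as is standard for this rate, that $K$ is bounded and Lipschitz.

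\textbf{Step 1 (truncation).} Take $b_n:=n^{1-\epsilon}$ and split $y_i=y_i\mathds{1}\{\abs{y_i}\le b_n\}+y_i\mathds{1}\{\abs{y_i}>b_n\}$, inducing $T_n=T_n^{(1)}+T_n^{(2)}$. The hypothesis $\sup_X\int\abs{y}^s f(X,y)\,dy<\infty$ gives $\EE[\abs{Y}\mathds{1}\{\abs{Y}>b_n\}\mid X=x]=O(b_n^{1-s})$ uniformly in $x$, hence $\sup_x\bigl|\EE\{K_h(x_1-x)y_1\mathds{1}\{\abs{y_1}>b_n\}\}\bigr|=O(b_n^{1-s})=o(\rho_n)$; and a union bound gives $\mathrm{P}(\max_{i\le n}\abs{y_i}>b_n)\le n\,b_n^{-s}\,\EE\abs{Y}^s=n^{1-(1-\epsilon)s}\,\EE\abs{Y}^s\to0$, where the convergence uses exactly $\epsilon<1-s^{-1}$. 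On the event $\{\max_i\abs{y_i}\le b_n\}$ the random part of $T_n^{(2)}$ vanishes, so $\sup_x\abs{T_n^{(2)}(x)}=o_p(\rho_n)$, and it remains to control $T_n^{(1)}$.

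\textbf{Step 2 (grid and Bernstein).} Cover $\mathrm{supp}(X)$ by $N_n\asymp\delta_n^{-1}$ points $x_{(1)},\dots,x_{(N_n)}$ with spacing $\delta_n=n^{-q}$ for a large fixed $q$. At each $x_{(k)}$, $T_n^{(1)}(x_{(k)})$ is an average of i.i.d.\ centered summands bounded by $M_n:=\norm{K}_\infty b_n/h$ with variance $\sigma_n^2=O(1/h)$ (using $\int K_h(u-x)^2\,du=O(1/h)$ and boundedness in $x$ of $\int y^2 f(x,y)\,dy$, which holds in our applications; for $1\le s<2$ one truncates a little more carefully but the conclusion is the same). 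Since $M_nt_n/\sigma_n^2\asymp b_n\rho_n=\bigl\{n^{1-2\epsilon}\log(1/h)/h\bigr\}^{1/2}\to0$ under $n^{2\epsilon-1}h\to\infty$, Bernstein's inequality at level $t_n:=C_0\rho_n$ gives $\mathrm{P}(\abs{T_n^{(1)}(x_{(k)})}>t_n)\le2\exp(-c\,nt_n^2/\sigma_n^2)=2\exp(-c'C_0^2\log(1/h))=2h^{c'C_0^2}$, using $n\rho_n^2 h=\log(1/h)$. A union bound over the $N_n=n^q$ grid points then yields $\mathrm{P}\bigl(\max_k\abs{T_n^{(1)}(x_{(k)})}>t_n\bigr)\le2n^q h^{c'C_0^2}\to0$ once $C_0$ is chosen large, since $N_n$ is only polynomial in $n$.

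\textbf{Step 3 (oscillation and assembly).} For $\abs{x-x'}\le\delta_n$, Lipschitz continuity of $K$ gives $\abs{K_h(u-x)-K_h(u-x')}\le Lh^{-2}\delta_n$, hence $\sup_{\abs{x-x'}\le\delta_n}\abs{T_n^{(1)}(x)-T_n^{(1)}(x')}\le2Lh^{-2}\delta_n b_n=o(\rho_n)$ for $q$ large. Combining Steps 1--3, $\sup_x\abs{T_n(x)}\le\max_k\abs{T_n^{(1)}(x_{(k)})}+\sup_{\abs{x-x'}\le\delta_n}\abs{T_n^{(1)}(x)-T_n^{(1)}(x')}+\sup_x\abs{T_n^{(2)}(x)}=O_p(\rho_n)$, which is the assertion. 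The hard part will be reconciling the two competing demands on the truncation level hidden in $b_n=n^{1-\epsilon}$: $b_n$ must be large enough that both the discarded conditional mean $O(b_n^{1-s})$ and the exceedance probability $n b_n^{-s}$ are negligible, yet small enough that the Bernstein ``linear'' term $M_nt_n/\sigma_n^2\asymp b_n\rho_n$ stays $o(1)$; the first requirement is exactly what forces $\epsilon<1-s^{-1}$ and the second is exactly what uses $n^{2\epsilon-1}h\to\infty$, so verifying that the admissible window for $b_n$ is nonempty — together with the $o(\rho_n)$ bookkeeping — is the only non-routine step, the grid costing merely a logarithmic factor and the kernel-Lipschitz oscillation bound being elementary.
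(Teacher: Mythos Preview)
Your proposal is correct and is precisely the approach the paper takes: the paper's entire proof is the one-line citation ``Lemma~\ref{lem:kernel} follows \cite{mack1982weak},'' and what you have written is a faithful reconstruction of the Mack--Silverman truncation--grid--Bernstein argument that yields this uniform rate. There is nothing to compare; you have simply unpacked the reference.
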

\begin{proof}
	Lemma~\ref{lem:kernel} follows \cite{mack1982weak}.
\end{proof}

\begin{lemma}
	\label{lem:sub-gaussian}
	Suppose $\bY = \left\{ Y_1,Y_2 \cdots, Y_n  \right\}$ follows a multivariate Gaussian distribution, then $\max\abs{Y_i}$ follows a sub-Gaussian distribution with variance $\max \text{var}(Y_i)$. Further, for any $t>0$, the tail probability can be controlled via
	\begin{equation*}
	\text{\normalfont P}\left\{ \max\abs{\epsilon_{ij}} \geq t \right\} \leq \exp\left( \frac{-t^2}{2} \right).
	\end{equation*}
\end{lemma}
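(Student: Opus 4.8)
The plan is to reduce the statement to the classical Chernoff tail bound for a single Gaussian coordinate and then pass to the maximum by a union bound, using the moment generating function to certify sub-Gaussianity. First I would observe that each coordinate $Y_i$ of the multivariate Gaussian vector $\bY$ is marginally univariate Gaussian, $Y_i \sim N(\mu_i, \sigma_i^2)$ with $\sigma_i^2 = \Var(Y_i)$; in the application of Lemma~\ref{lem:node-wise-normal} the relevant variables are the residuals $\epsilon_{ij}$, which are standard normal, so $\mu_i = 0$ and $\sigma_i^2 = 1$. The defining sub-Gaussian property then comes for free from the exact Gaussian MGF, $\EE\exp(\lambda(Y_i-\mu_i)) = \exp(\lambda^2\sigma_i^2/2)$, which certifies that $Y_i$ is sub-Gaussian with parameter $\sigma_i^2$, hence also with the uniform parameter $\sigma^2 := \max_i \Var(Y_i)$ since enlarging the parameter only weakens the inequality.

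Second, I would control the tail of a single $\abs{Y_i}$. Applying the Chernoff bound to the one-sided event and optimizing over $\lambda$ gives $\text{P}(Y_i - \mu_i \geq t) \leq \exp(-t^2/(2\sigma^2))$, and by symmetry of the Gaussian the same bound holds for the lower tail; combining the two yields $\text{P}(\abs{Y_i} \geq t) \leq 2\exp(-t^2/(2\sigma^2))$ in the centered case $\mu_i = 0$. This is the only genuinely analytic estimate needed, and it is standard.

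Third, I would lift this to the maximum by a union bound over the finite index set being maximized (the $n$ coordinates, or the $np$ residuals $\epsilon_{ij}$ in the downstream application), so that $\text{P}(\max_i \abs{Y_i} \geq t) \leq \sum_i \text{P}(\abs{Y_i} \geq t) \leq 2n\exp(-t^2/(2\sigma^2))$. Specializing to $\sigma^2 = 1$ recovers the exponential rate $\exp(-t^2/2)$ stated for the $\epsilon_{ij}$; the combinatorial prefactor does not decay but, in the use inside Lemma~\ref{lem:bound-der}, it is absorbed into the auxiliary constants and the $\sqrt{\log p / n}$ scaling through the choice $\epsilon_c = 2\log p$, so only the quadratic-in-$t$ exponent is material.

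The main obstacle is interpretive rather than computational: strictly speaking $\max_i \abs{Y_i}$ is not a centered sub-Gaussian variable of variance $\max_i \Var(Y_i)$, since its mean grows like $\sigma\sqrt{2\log n}$, so the phrase ``sub-Gaussian with variance $\max \Var(Y_i)$'' must be read as an assertion about the Gaussian decay rate of the upper tail rather than as an exact moment-generating-function identity for the maximum. The careful part of the write-up is therefore to state precisely in what sense sub-Gaussianity is claimed and to keep track of the union-bound prefactor so that its suppression in the final displayed bound is justified by how the estimate is actually invoked; beyond this bookkeeping, no difficulty arises past the elementary Gaussian Chernoff argument.
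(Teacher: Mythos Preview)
The paper does not actually supply a proof of this lemma; it is listed among the auxiliary lemmas but left unproved, and in fact the main argument of Lemma~\ref{lem:bound-der} routes around it by appealing to the Laurent--Massart chi-squared inequalities instead. So there is nothing in the paper to compare your argument against.

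Your Chernoff-plus-union-bound route is the standard one and is correct at the level of the exponential rate. You are also right to flag the interpretive issue: as written, the displayed bound $\text{P}\{\max\abs{\epsilon_{ij}} \geq t\} \leq \exp(-t^2/2)$ omits the union-bound prefactor, and $\max_i\abs{Y_i}$ is not literally sub-Gaussian with parameter $\max_i\Var(Y_i)$ in the MGF sense. Your reading --- that the claim is about the Gaussian decay rate of the tail, with the combinatorial factor absorbed later into constants and the $\sqrt{\log p/n}$ scaling --- is the only sensible one, and it matches how such bounds are deployed elsewhere in the paper. If you write this up, make that caveat explicit rather than leaving the displayed inequality as stated.
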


\begin{lemma}
	\label{lem:bound-uniform}
	For any $\epsilon >0$, there exists $\delta >0$ and $N>0$, so that when $n>N$, we have
	\begin{equation*}
	\text{\normalfont P}\left\{\norm{\frac{\bX_{j}'(\bI -\bS_j )\bM_j}{n}}_\infty \geq \delta c^2_n\right\} \leq \epsilon,
	\end{equation*}
	uniformly for $j\in[p]$.
\end{lemma}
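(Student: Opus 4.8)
The plan is to follow the classical template for controlling the bias cross-term in partially linear / varying-coefficient profile-likelihood estimation~\citep{speckman1988kernel, fan2005profile, kolar2010estimating}, adapted to the vector-valued local-linear smoother $\bS_j$ built from $\bD_{ij}$ and $\bW_i$ in Lemma~\ref{lem:m}. First I would reduce to node-wise quantities: by Lemma~\ref{lem:node-wise-normal}, $\by_j-\bx_j\bOmega^*_{0\cdot j}=\bM_j+\bepsilon_j$, so it suffices to bound the single $p\times p$ matrix $\tfrac1n{\bx_j'}^\top(\bI-\bS_j)\bM_j=\tfrac1n\bx_j^\top(\bI-\bS_j)^\top(\bI-\bS_j)\bM_j$ entrywise and uniformly in $j\in[p]$.

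The core of the argument is a decomposition of both factors into a smooth-in-$g$ part and a conditionally mean-zero part: write $\bx_j=\bm{g}_j+\bu_j$ with $[\bm{g}_j]_i^\top=\EE[\bz_{i,-j}^\top\mid g_i]$ and $\bu_j$ having conditionally (given $\{g_i\}$) independent mean-zero rows, and similarly $\bM_j=\bm\mu_j+\bv_j$ with $[\bm\mu_j]_i=\EE[M_{ij}\mid g_i]$ smooth. Since the columns of $\bD_{ij}$ span all maps $\bz_{k,-j}^\top a+\bz_{k,-j}^\top b\,(g_k-g_i)/h$, the local-linear smoother reproduces functions linear in $g$ exactly; together with Assumption~\ref{asm:f-r} ($\bR(\cdot)\in C^2$) and Assumption~\ref{asm:smooth} (Lipschitz continuity of the relevant conditional second-moment matrices and their inverses), this gives that $(\bI-\bS_j)$ applied to any smooth component ($\bm{g}_j$, $\bm\mu_j$) is $O(h^2)$ entrywise, while $(\bI-\bS_j)$ applied to a centered component equals that component plus a smoother remainder of size $O_p(\sqrt{-\log h/(nh)})$ entrywise, by the uniform kernel bound of Lemma~\ref{lem:kernel} (Mack--Silverman). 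Uniform invertibility and control of $(\bD_{ij}^\top\bW_i\bD_{ij}/(nh))^{-1}$ over $i\in[n]$ and $j\in[p]$ comes from Assumption~\ref{asm:smooth} combined with the sub-Gaussian tail of Lemma~\ref{lem:sub-gaussian} and a union bound.

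Assembling the four cross products, $\tfrac1n\bx_j^\top(\bI-\bS_j)^\top(\bI-\bS_j)\bM_j$ splits into: (i) a smooth$\times$smooth term bounded deterministically by $O(h^4)$ (two $O(h^2)$ bias factors); (ii) smooth$\times$centered terms, which are averages of conditionally mean-zero summands paired with an $O(h^2)$ deterministic factor, hence $O_p(h^2/\sqrt{n})$, plus smoother cross-terms of size $O_p(h^2\sqrt{-\log h/(nh)})$; and (iii) a centered$\times$centered term, handled by a conditional Bernstein/Hoeffding bound using the Gaussianity of $\bZ$, giving a fluctuation of order $O_p(-\log h/(nh))$ up to smoother corrections, its conditional mean being $O(h^2)$-small because $(\bI-\bS_j)$ nearly annihilates the leading linear-in-$g$ part of $\bM_j$. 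Each piece is $O\!\big(h^4+(-\log h)/(nh)+h^2\sqrt{(-\log h)/(nh)}\big)=O(c_n^2)$; a union bound over the $p^2$ entries and the $n$ internal evaluation points costs only $\log(np)$, which is absorbed into $c_n^2$ via Assumption~\ref{asm:order-n-h}. This yields $\text{P}\{\norm{\tfrac1n{\bx_j'}^\top(\bI-\bS_j)\bM_j}_\infty\ge\delta c_n^2\}\le\epsilon$ uniformly in $j$ for suitable $\delta$ and $N$.

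The main obstacle is precisely the appearance of $c_n^2$ rather than $c_n$: naively $(\bI-\bS_j)\bM_j$ is only $O(h^2)$ entrywise, so the extra factor $c_n$ must be extracted from the orthogonality/centering structure --- the residualized design $(\bI-\bS_j)\bx_j$ is small in exactly the directions that survive the product (its leading $O(1)$ contribution is conditionally mean-zero), and the local-linear fit strips off the linear-in-$g$ part of $\bM_j$. Making this rigorous while keeping every bound uniform over $j\in[p]$ and over the $n$ local fits hidden inside $\bS_j$, so that $\log p$ enters only through the $c_n^2\le C_1\sqrt{\log p/n}$ budget of Assumption~\ref{asm:order-n-h}, is the delicate part; the remainder is bookkeeping of standard local-polynomial bias/variance estimates.
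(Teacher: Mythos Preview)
Your plan correctly identifies the crux --- extracting the rate $c_n^2$ rather than $c_n$ --- and the right ingredients (Mack--Silverman uniform kernel bounds, local-linear bias control). But the specific decomposition $\bx_j=\bm g_j+\bu_j$, $\bM_j=\bm\mu_j+\bv_j$ with $\bm g_j,\bm\mu_j$ \emph{smooth functions of $g$ alone} does not mesh with the smoother here. $\bS_j$ is a \emph{varying-coefficient} local-linear projector onto the span $\{\bz_{\cdot,-j}^\top(a+b(g-g_i)/h)\}$, not a scalar smoother in $g$. It reproduces exactly maps of the form $\bz^\top\times(\text{linear in }g)$, which is why $(\bI-\bS_j)\bM_j$ is $O_p(c_n)$ (each $M_{i'j}=\bz_{i',-j}^\top\balpha_j(g_{i'})$ lies in that class up to a Taylor remainder). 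But a pure function of $g$ such as your $\bm g_j$ or $\bm\mu_j$ is \emph{not} in the reproduced span: expanding $\bS_{ij}[\bm g_j]_{\cdot k}$ via the asymptotic form of $\bD_{ij}^\top\bW_i\bD_{ij}$ gives $\bz_{i,-j}^\top\EE^{-1}[\mathds{1}^2\bZ\bZ^\top\mid g_i]\,\EE[\mathds{1}\bZ\mid g_i]\,\EE[z_k\mid g_i]$, so the residual $[(\bI-\bS_j)\bm g_j]_{ik}$ is $O_p(1)$, not $O(h^2)$. Terms (i)--(ii) of your four-way split therefore do not carry the sizes you claim, and term (iii) inherits a nonvanishing conditional mean through the correlation between $\bu_j$ and $\bv_j=\bu_j^\top\balpha_j(g)$.

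The paper takes a different route that sidesteps this: it expands the building blocks $\bD_{ij}^\top\bW_i\bD_{ij}$, $\bD_{ij}^\top\bW_i\bM_j$, $\bD_{ij}^\top\bW_i\bx_j$ directly to leading order with multiplicative $\{1+O_p(c_n)\}$ errors via Lemma~\ref{lem:kernel}, obtains the explicit forms of $\bS_{ij}\bM_j$ and $\bx'_{ij}$ (equations \eqref{eq:aux-1}--\eqref{eq:aux-2}), and then imports the $c_n^2$ cancellation from Lemma~A.4 of \citet{fan2005profile}. Uniformity over $j\in[p]$ is achieved not by a union bound but by a single auxiliary matrix $\bO_i$ built from the augmented vector $\bb_{i'}=[\mathds{1}_{|g_{i'}|\ge g^*}\bz_{i'}^\top,1]$, so that every $\bD_{ij}^\top\bW_i\bD_{ij}$ is a submatrix of $\bO_i$ and one application of Mack--Silverman suffices. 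Your union-bound-over-$p^2$ idea is also fragile as stated: Assumption~\ref{asm:order-n-h} is an \emph{upper} bound $c_n^2\le C_1\sqrt{\log p/n}$, so it cannot be invoked to ``absorb'' extra $\log p$ factors into $c_n^2$; the $\bO_i$ device is what actually delivers $j$-uniformity for free.
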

\begin{proof}
	To start with, we review the definition of $\bS_{ij}$
	\begin{equation*}
	\bS_{ij} = 
	\begin{bmatrix}
	\mathds{1}_{g_{i'} > g^*}\bz_{i,-j}^\top & 0 
	\end{bmatrix}
	\left(
	\bD_{ij}^\top \bW_i \bD_{ij}
	\right)^{-1}
	\bD_{ij}^\top \bW_i. 
	\end{equation*}
	We first study $\bD_{ij}^\top \bW_i \bD_{ij}$:
	\begin{equation*}
	\bD_{ij}^\top \bW_i \bD_{ij} = 
	\begin{bmatrix}
	\sum_{i'=1}^n \mathds{1}^2_{g_{i'} > g^*} \bz_{i',-j} \bz_{i',-j}^\top \psi\left(\abs{g_{i'}-g_i}/h\right) 
	& \sum_{i'=1}^n \mathds{1}^2_{g_{i'} > g^*}\bz_{i',-j} \bz_{i',-j}^\top \frac{g_{i'} - g_i}{h} \psi\left(\abs{g_{i'}-g_i}/h\right)
	\\\sum_{i'=1}^n \mathds{1}^2_{g_{i'} > g^*}\bz_{i', -j} \bz_{i'-j}^\top \frac{g_{i'} - g_i}{h} \psi\left(\abs{g_{i'}-g_i}/h\right)
	&\sum_{i'=1}^n \mathds{1}^2_{g_{i'} > g^*}\bz_{i', -j} \bz_{i',-j}^\top \left(\frac{g_{i'} - g_i}{h}\right)^2 \psi\left(\abs{g_{i'}-g_i}/h\right)
	\end{bmatrix}.
	\end{equation*}
	To bound $\bD_{ij}^\top \bW_i \bD_{ij}$ uniformly over $j$, we consider a random vector $\bB_i = [\mathds{1}_{g_{i'} > g^*} \bZ_i^\top, 1]^\top$, with observations
	\begin{equation*}
	\begin{bmatrix}
	\bb_1 = \left[ \mathds{1}_{g_{i'} > g^*}\bz_1^\top,1  \right]
	\\ \vdots
	\\ \bb_n = \left[ \mathds{1}_{g_{i'} > g^*}\bz_n^\top,1 \right]
	\end{bmatrix}.
	\end{equation*}
	Then, we study an auxiliary matrix
	\begin{equation*}
	\bO_i = 
	\begin{bmatrix}
	\sum_{i'=1}^n \mathds{1}^2_{g_{i'} > g^*} \bb_{i'} \bb_{i'}^\top \psi\left(\abs{g_{i'}-g_i}/h\right) 
	& \sum_{i'=1}^n \mathds{1}^2_{g_{i'} > g^*}\bb_{i'} \bb_{i'}^\top \frac{g_{i'} - g_i}{h} \psi\left(\abs{g_{i'}-g_i}/h\right)
	\\\sum_{i'=1}^n \mathds{1}^2_{g_{i'} > g^*}\bb_{i'} \bb_{i'}^\top \frac{g_{i'} - g_i}{h} \psi\left(\abs{g_{i'}-g_i}/h\right)
	&\sum_{i'=1}^n \mathds{1}^2_{g_{i'} > g^*}\bb_{i'} \bb_{i'}^\top \left(\frac{g_{i'} - g_i}{h}\right)^2 \psi\left(\abs{g_{i'}-g_i}/h\right)
	\end{bmatrix}.
	\end{equation*}

	Therefore, the components of $\bD_{ij}^\top \bW_i \bD_{ij}$ belong to $\bO_i$, and each part of $\bO_i$ is in the form of a kernel regression. By Lemma~\ref{lem:kernel}, we have
	\begin{equation*}
	\bO_i = nf(g_i)\mathbb{E}\left[ \bB_{i}\bB_{i}^\top \given g_i \right] \otimes	
	\begin{bmatrix}
	1&0
	\\0&\mu_2
	\end{bmatrix}\left\{1+O_p(c_n) \right\},
	\end{equation*} 
	which holds uniformly for $i$. Therefore,
	\begin{equation}
	\label{eq:dwd}
	\bD_{ij}^\top \bW_i \bD_{ij} = nf(g_i)\mathbb{E}\left[ \mathds{1}^2_{g_{i'} > g^*}\bZ_{i,-j} \bZ_{i,-j}^\top \given g_i \right] \otimes	
	\begin{bmatrix}
	1&0
	\\0&\mu_2
	\end{bmatrix}\left\{1+O_p(c_n) \right\}
	\end{equation} 
	holds uniformly for $i$ with the same $O_p(c_n)$ for every $j$.
	Define 
	\begin{equation*}
	\balpha_j(g_i) = 
	\begin{bmatrix}
	\bOmega_{1\cdot j} &\cdots&\bOmega_{n\cdot j}
	\end{bmatrix}.
	\end{equation*}
	By the same technique, uniformly for $i$ and with the same $O_p(c_n)$ for every $j$, we can show
	\begin{equation}
	\label{eq:dwm}
	\bD_{ij}^\top \bW_i \bM_{j} = nf(g_i)\mathbb{E}\left[ \mathds{1}^2_{g_{i'} > g^*}\bZ_{i,-j} \bZ_{i,-j}^\top \given g_i \right] \otimes	
	\begin{bmatrix}
	1&0
	\end{bmatrix}^\top\balpha_j(g_i)
	\left\{1+O_p(c_n) \right\},
	\end{equation}
	and
	\begin{equation}
	\label{eq:dwx}
	\bD_{ij}^\top \bW_i \bx_{j} = nf(g_i)\mathbb{E}\left[ \mathds{1}_{g_{i'} > g^*}\bZ_{i,-j} \bZ_{i,-j}^\top \given g_i \right] \otimes	
	\begin{bmatrix}
	1&0
	\end{bmatrix}^\top\left\{1+O_p(c_n) \right\}.
	\end{equation}  
	
	Combining \eqref{eq:dwd} and \eqref{eq:dwm} we have
	\begin{equation}
	\label{eq:aux-1}
	\begin{bmatrix}
	\tilde{\bx}_j^\top & 0 
	\end{bmatrix}
	\left(
	\bD_{ij}^\top \bW_i \bD_{ij}
	\right)^{-1}
	\bD_{ij}^\top \bW_i \bM_j = \tilde{\bx}_j^\top\balpha_j(g_i) \left\{1+O_p(c_n) \right\}.
	\end{equation}

	Similarly, combining \eqref{eq:dwd} and \eqref{eq:dwx}, we have 
	\begin{equation}
	\label{eq:aux-2}
	\bx'_{ij} = \bx_{ij}-   \tilde{\bx}_{ij} \mathbb{E}^{-1}\left[ \mathds{1}^2_{g_{i'} > g^*}\bZ_{i,-j} \bZ_i^\top \given g_i \right]\mathbb{E}\left[ \mathds{1}_{g_{i'} > g^*}\bZ_{i,-j} \bZ_{i,-j}^\top \given g_i \right].
	\end{equation}
	Next, we follow the rationale of the Lemma A.4 in \citep{fan2005profile}, and combine \eqref{eq:aux-1} and \eqref{eq:aux-2}. Finally, we have
	\begin{equation*}
	\frac{\bx_{j}'(\bI -\bS_j )\bM_j}{n} = O_p(c^2_n)
	\end{equation*}
	uniformly for $j$.
\end{proof}

\begin{lemma}
	\label{lem:bound-uniform-epsilon}
	For any $\epsilon >0$, there exists $N>0$, so that when $n>N$, we have
	\begin{equation*}
	\norm{\bx_j'^\top (\bI - \bS_j)\bepsilon_j}_\infty \geq
	2\sum_{i=1}^n \left\{
	\bx_{ij} -\mathbb{E}^\top\left[ \mathds{1}_{g_{i'} > g^*}\bZ_{i,-j} \bZ_{i,-j}^\top \given g_i \right] \mathbb{E}^{-1}\left[ \mathds{1}^2_{g_{i'} > g^*}\bZ_{i,-j} \bZ_i^\top \given g_i \right]\tilde{\bx}_{ij}
	\right\}\epsilon_{ij},
	\end{equation*}
	uniformly for $j\in[p]$ with probability less than $\epsilon$.
\end{lemma}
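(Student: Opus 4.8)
The plan is to treat Lemma~\ref{lem:bound-uniform-epsilon} as the noise-term counterpart of Lemma~\ref{lem:bound-uniform}: where that lemma controlled the bias contribution $\bx_j'^\top(\bI-\bS_j)\bM_j$, here I must show that the ``double smoothing'' applied to $\bepsilon_j$ collapses to a single clean average, up to a $(1+o_p(1))$ factor, uniformly over the $p$ node-wise regressions. First I would rewrite the target using $\bx_j' = (\bI-\bS_j)\bx_j$, so that $\bx_j'^\top(\bI-\bS_j)\bepsilon_j = \big[(\bI-\bS_j)\bx_j\big]^\top(\bI-\bS_j)\bepsilon_j$, and expand each of the two factors separately. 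For the $\bx$-factor I would reuse the expansion \eqref{eq:aux-2} already established inside the proof of Lemma~\ref{lem:bound-uniform}: combining \eqref{eq:dwd} and \eqref{eq:dwx} (both consequences of the kernel concentration Lemma~\ref{lem:kernel}), the $i$-th row of $(\bI-\bS_j)\bx_j$ equals $\bx_{ij} - \mathbb{E}[\mathds{1}_{g>g^*}\bZ_{i,-j}\bZ_{i,-j}^\top\mid g_i]\,\mathbb{E}^{-1}[\mathds{1}^2_{g>g^*}\bZ_{i,-j}\bZ_{i,-j}^\top\mid g_i]\,\tilde{\bx}_{ij}$ up to a multiplicative $(1+O_p(c_n))$ that is uniform over $i$ and identical for every $j$. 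This already produces the bracketed coefficient appearing in the statement.

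For the $\bepsilon$-factor, the key observation is that $\epsilon_{ij}$ has conditional mean zero given $(G_i,\bZ_{i,-j})$ by Lemma~\ref{lem:node-wise-normal}, so the kernel-weighted quantities $\tfrac1n\bD_{ij}^\top\bW_i\bepsilon_j$ are kernel-regression sums of mean-zero variables; applying Lemma~\ref{lem:kernel} with the response replaced by $\epsilon_{\cdot j}$ (whose conditional mean vanishes, so the centering term is $0$) shows each entry is $O_p\!\big(\sqrt{\log(1/h)/(nh)}\big)$, uniformly over $i$ and $j$. Hence $\bS_j\bepsilon_j$ is entrywise $o_p(1)$, so in the relevant sense $(\bI-\bS_j)\bepsilon_j = \bepsilon_j(1+o_p(1))$, and the cross term $\big[(\bI-\bS_j)\bx_j\big]^\top\bS_j\bepsilon_j$ is of strictly smaller order than $\big[(\bI-\bS_j)\bx_j\big]^\top\bepsilon_j$. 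Assembling the two expansions, exactly along the lines of Lemma~A.4 of \citet{fan2005profile}, yields
\[
\bx_j'^\top(\bI-\bS_j)\bepsilon_j = \sum_{i=1}^n \Big( \bx_{ij} - \mathbb{E}^\top[\mathds{1}_{g>g^*}\bZ_{i,-j}\bZ_{i,-j}^\top\mid g_i]\, \mathbb{E}^{-1}[\mathds{1}^2_{g>g^*}\bZ_{i,-j}\bZ_{i,-j}^\top\mid g_i]\, \tilde{\bx}_{ij} \Big)\,\epsilon_{ij}\,(1+o_p(1)),
\]
which is the content of the lemma once the $(1+o_p(1))$ remainder is absorbed into the stated deviation event (the constant $2$ being the bookkeeping slack for this absorption).

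For the uniform-in-$j$ conclusion I would finish with a union bound over $j\in[p]$: all the $O_p$ statements above hold with a common rate across the $p$ regressions, since Lemma~\ref{lem:kernel} is applied once to the augmented vector $\bB_i=[\mathds{1}_{g>g^*}\bZ_i^\top,1]^\top$ rather than coordinate-by-coordinate, and the Gaussian residuals $\epsilon_{ij}$ are sub-Gaussian (Lemma~\ref{lem:sub-gaussian}), so inflating the deviation threshold by the usual $\log p$ factor keeps the failure probability below $\epsilon$ for $n$ large. I expect the main obstacle to be the uniform control of the double-smoothing cross term $\big[(\bI-\bS_j)\bx_j\big]^\top\bS_j\bepsilon_j$: showing it is genuinely negligible requires combining the uniform (over $i$ and $j$) kernel concentration with careful tracking of how the $(1+O_p(c_n))$ factors from \eqref{eq:dwd}--\eqref{eq:aux-2} interact with the $n$-fold summation against $\epsilon_{ij}$, which is precisely where the $o_p(1)$ in the displayed identity originates and where Assumption~\ref{asm:order-n-h} on the rates of $n$, $p$, $h$ is invoked.
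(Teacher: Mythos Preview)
Your proposal is essentially the same argument as the paper's proof: both use the kernel-concentration expansion \eqref{eq:dwd} (applied through the augmented vector $\bB_i$) to show that $\bS_{ij}\bepsilon_j$ is $O_p(c_n)$ uniformly in $i,j$ because the conditional mean of $\epsilon_{ij}$ vanishes, and then combine this with the $\bx_{ij}'$ expansion \eqref{eq:aux-2} to obtain the displayed $(1+o_p(1))$ identity. The paper's write-up is considerably terser but follows exactly the decomposition and concentration steps you outline.
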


\begin{proof}
	
	By definition, we have
	\begin{equation*}
	\bx_j'^\top (\bI - \bS_j)\bepsilon_j = 
	\sum_{i=1}^n \bx_{ij}'
	\left\{
	\epsilon_{ij} - 
	\begin{bmatrix}
	\tilde{\bx}_{ij}^\top & 0 
	\end{bmatrix}
	\left(
	\bD_{ij}^\top \bW_i \bD_{ij}
	\right)^{-1}\bD_{ij}^\top \bW_i \bepsilon_j
	\right\}.
	\end{equation*}
	Using the technique in \eqref{eq:dwd}, we have
	\begin{equation*}
	\begin{bmatrix}
	\tilde{\bx}_{ij}^\top & 0 
	\end{bmatrix}
	\left(
	\bD_{ij}^\top \bW_i \bD_{ij}
	\right)^{-1}\bD_{ij}^\top \bW_i \bepsilon_j = \tilde{\bx}_{ij}^\top\mathbb{E}^{-1}\left[ \mathds{1}^2_{g_{i'} > g^*}\bZ_{i,-j} \bZ_i^\top \given g_i \right] 
	\mathbb{E} \left[ \tilde{\bx}_{ij}^\top \given g_i \right]O_p(c_n).
	\end{equation*}
	Therefore, 
	\begin{equation*}
	\bx_j'^\top (\bI - \bS_j)\bepsilon_j = 
	\sum_{i=1}^n \left\{
	\bx_{ij} -\mathbb{E}^\top\left[ \mathds{1}_{g_{i'} > g^*}\bZ_{i,-j} \bZ_{i,-j}^\top \given g_i \right] \mathbb{E}^{-1}\left[ \mathds{1}^2_{g_{i'} > g^*}\bZ_{i,-j} \bZ_i^\top \given g_i \right]\tilde{\bx}_{ij}
	\right\}\epsilon_{ij} [1+o_p(1)],
	\end{equation*}
	uniformly for $j$.
\end{proof}

\section{Proof of Theorem~\ref{thm:con-lr-ggm}}
We first study CON-GGMs.
According to \eqref{eq:con-ggm} and \cite{eaton1983multivariate}, we have 
\begin{equation*}
\left[\text{cov}\left(\bZ\given G=g   \right)\right]^{-1} = \left[
\bSigma_{\bZ \bZ}  
- \bSigma_{\bZ G}
\bSigma_{GG}^{-1}
\bSigma_{G \bZ} 
\right]^{-1},
\end{equation*}
whose right-hand side has nothing to do with $g$. Therefore, the conditional distribution of $\bZ\given G=g $ follows a GGM with parameter $\left[
\bSigma_{\bZ \bZ}  
- \bSigma_{\bZ G}
\bSigma_{GG}^{-1}
\bSigma_{G \bZ} 
\right]^{-1}$ irrelevant to $g$. In other words CON-GGM is equivalent to assuming that $G$ follows a normal distribution and $\bR(g)=0$ on the basis of the proposed PLA-GGM. 

Then, we study LR-GGMs. Again, given $G=g$ for any $g$, we have
\begin{equation*}
\left[\text{cov}\left(\bZ\given G=g   \right)\right]^{-1} = \bOmega_0,
\end{equation*}
which has nothing to do with $G$ either. Given $G=g$, the conditional distribution of $\bZ\given G=g$ follows a GGM with the parameter $\bOmega$. Therefore, LR-GGM is a special case of the proposed PLA-GGM by assuming $\bR(g)=0$.

\section{Experiments}

\subsection*{Data Simulation}
To simulate the samples from PLA-GGMs, we first define 
\begin{equation*}
f(g) = \begin{cases}
g-10 & g>12
\\x+\frac{(x-12)^2}{4}-11 & 10< g \leq 12
\\ 0 & -10 < g \leq 10
\\ x +\frac{(x+12)^2}{4} + 11 & -12 < g \leq -10
\\ g+10 & g\leq -12
\end{cases} 
\end{equation*}
We provide the following procedure:
\begin{enumerate}
	\item We consider $p = 10, 20, 50, 100$, and implement the following steps separately.  
	\item We randomly generate a sparse precision matrix as $\Omega_0$ Specifically, each element of $\Omega_0$ is drawn randomly to be non-zero with probability 0.3.
	\item A dense precision matrix $\bW$ is generated to build the confounding. 
	\item We take $\left\{-400,\cdots, 0,\cdots, 399  \right\}$ as the confounders. For each $g \in \left\{-400,\cdots, 0,\cdots, 399  \right\}$, the precision matrix is selected to be $\bOmega(g) = \bOmega_0 + f(g)\bW$, and a sample is generated by a GGM with parameter $\bOmega(g)$. Thus, we get 800 samples.  
\end{enumerate} 

Note that the procedure is equivalent to selecting $g^* =10$. 

\subsection*{Glass Brains for Brain Function Connectivity Estimation}
We report the glass brains from other angles for the brain function connectivity estimation experiment in Section~\ref{sec:general-analysis}.

\begin{figure*}[h]
	\centering
	\includegraphics[scale=0.8]{./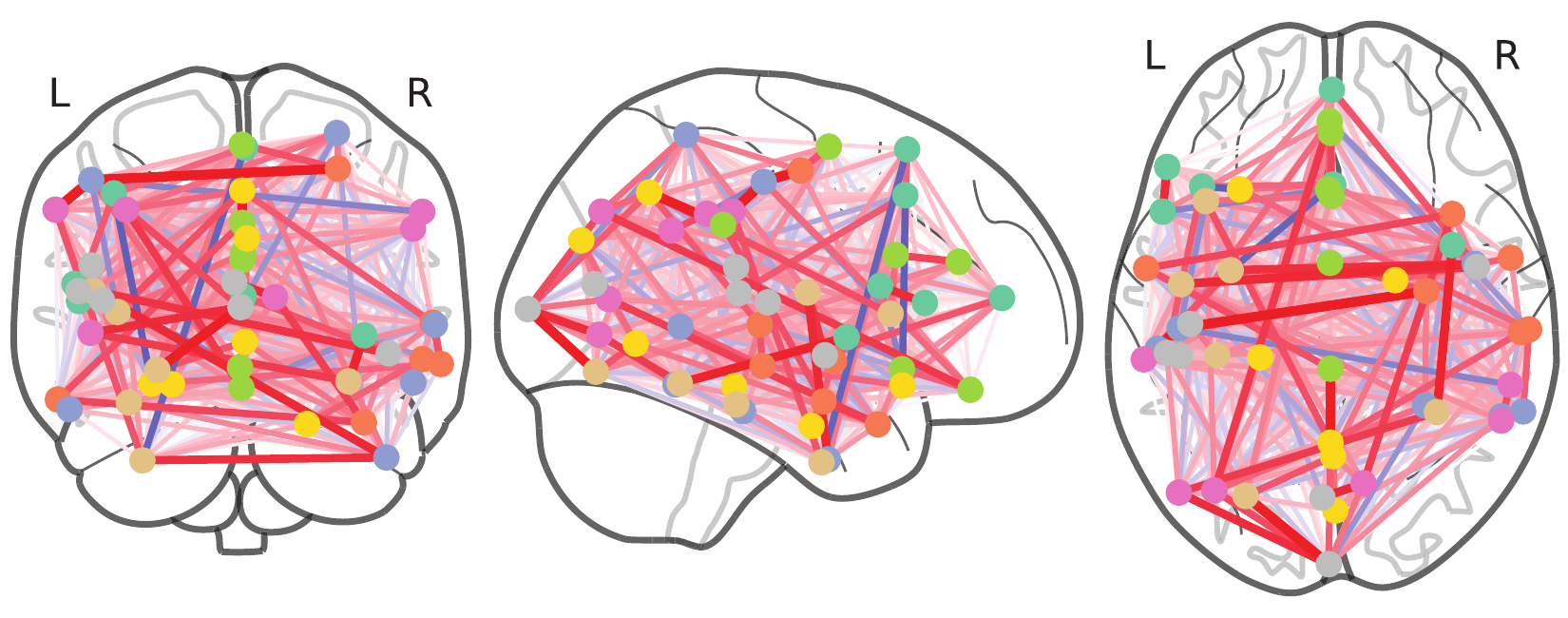}
	\caption{Controls using PLA-GGMs}
\end{figure*}
\begin{figure*}[h]
	\centering
	\includegraphics[scale=0.8]{./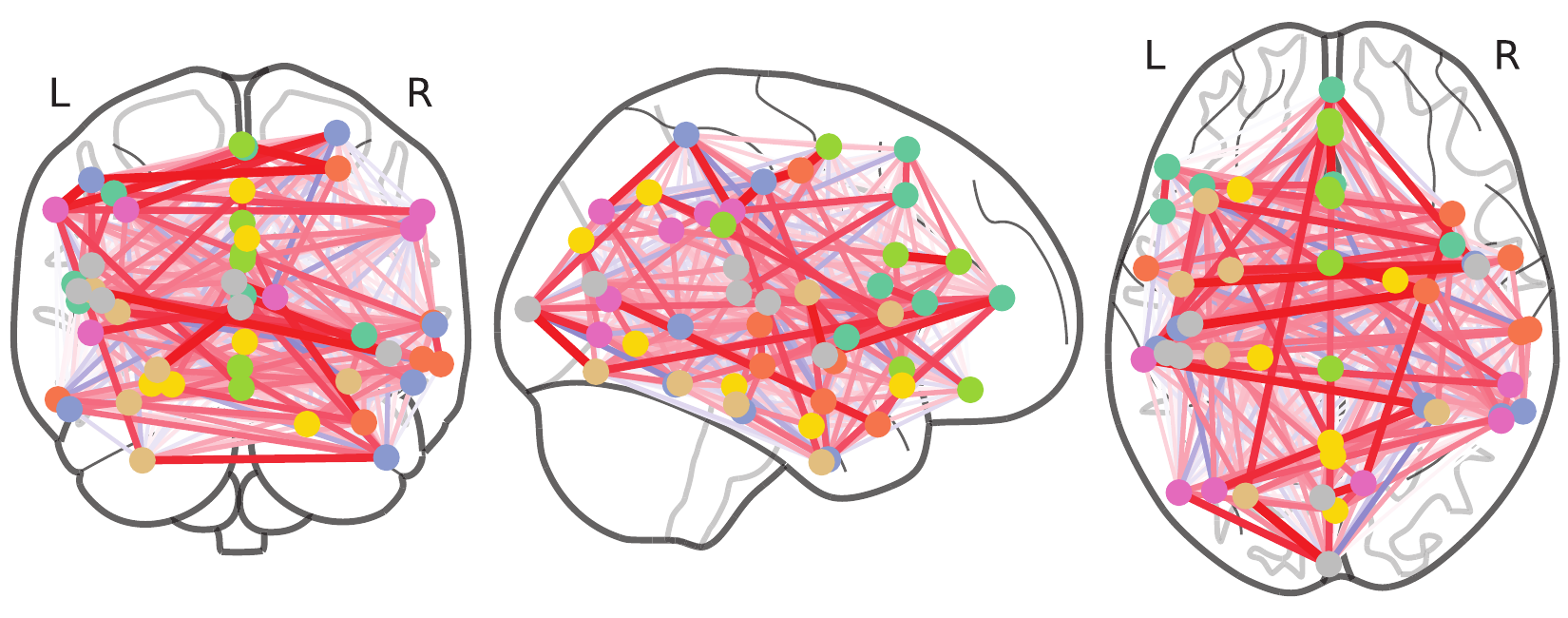}
	\caption{Patients using PLA-GGMs}
\end{figure*}
\begin{figure*}[h]
	\centering
	\includegraphics[scale=0.8]{./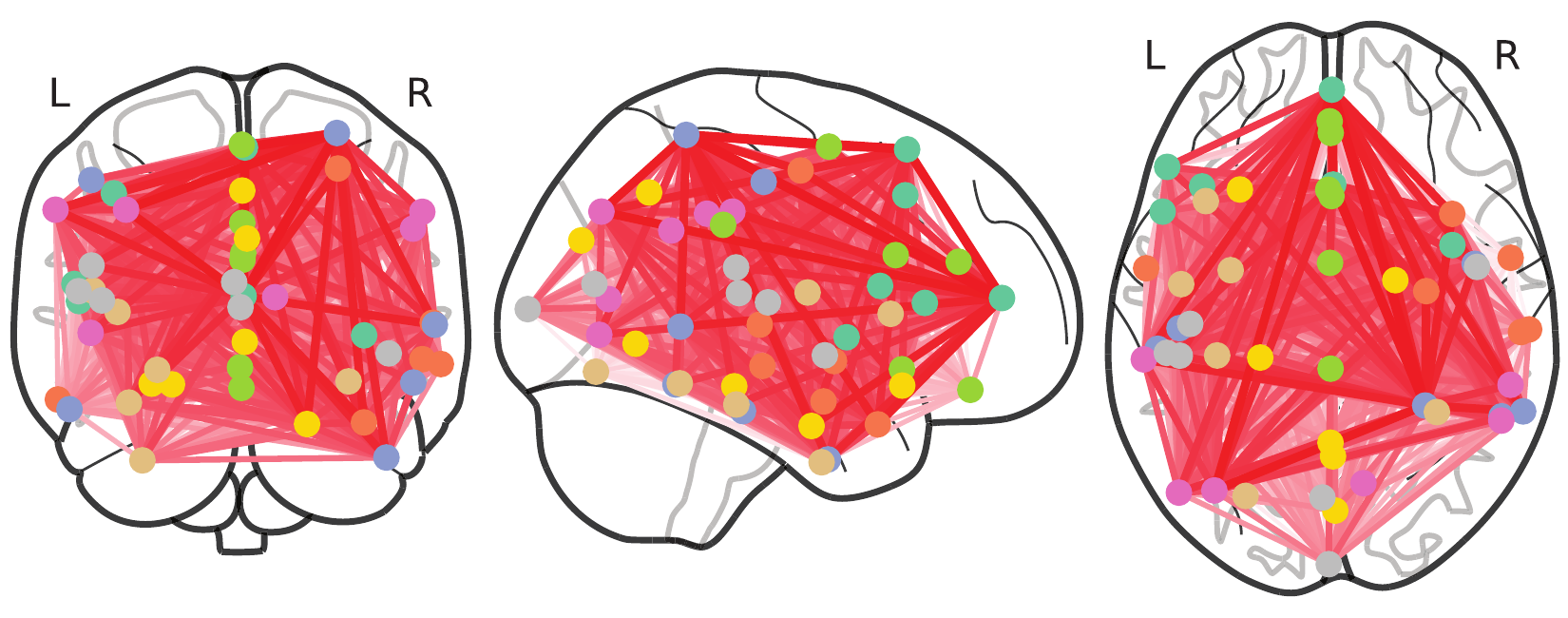}
	\caption{Controls using LR-GGMs}
\end{figure*}
\begin{figure*}[h]
	\centering
	\includegraphics[scale=0.8]{./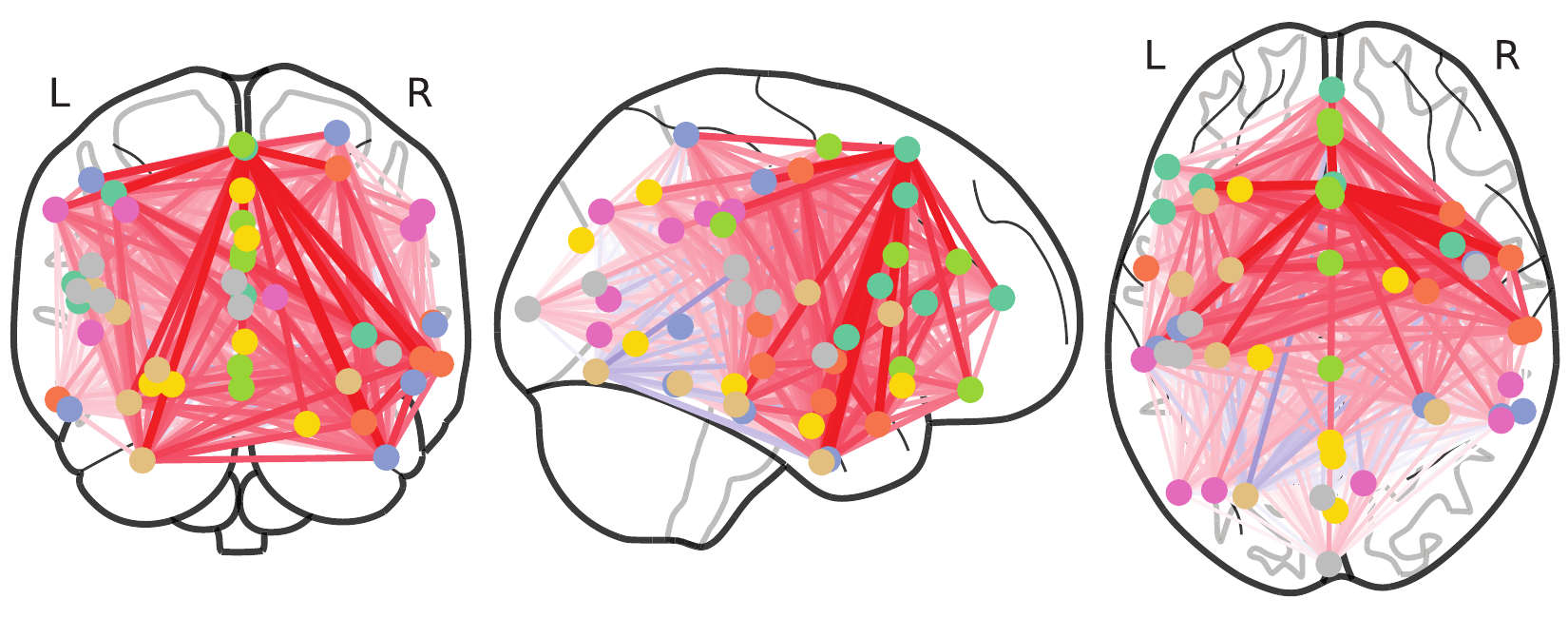}
	\caption{Patients using LR-GGMs}
\end{figure*}

\subsection*{Schizophrenia Diagnosis using Different $\mathds{1}_{\left\{ \abs{g}\geq g^* \right\}}$'s}
We conduct the analysis in Section~\ref{sec:diagnosis} using different $\mathds{1}_{\left\{ \abs{g}\geq g^* \right\}}$'s. Specifically, we consider the function $1- \exp(-kx^2)/2$ using $k=144, 150$. The achieved accuracy using the parameter selected by the 10-fold cross validation and AIC are reported in Figure~\ref{fig:1-function}. The performance of PLA-GGMs is not hugely affected when selecting $\mathds{1}_{\left\{ \abs{g}\geq g^* \right\}}$ in a reasonable range, which is consistent with our analysis in Theorem~\ref{thm:sparsistency}. Note that, if we select $k$ too large, the PPL method will be not applicable. The reason is that a large $k$ corresponds to a small $g^*$, and will induce few non-confounded samples observed. As a result, $\left(
\bD_{ij}^\top \bW_i \bD_{ij}
\right)$ will be singular. In practice, if we use a relative large $g^*$ corresponding to a small $k$, \eqref{eq:asm-pla} will tend to be like $\bR(g) = 0$ used in CON-GGMs and LR-GGMs. 

\begin{figure*}[t]
	\centering
	\begin{subfigure}{0.33\textwidth}
		\centering
		\includegraphics[scale=0.4]{./support/aucaic.pdf}
		\caption{$k=100$}
	\end{subfigure}
	\centering
	\begin{subfigure}{0.33\textwidth}
		\centering
		\includegraphics[scale=0.4]{./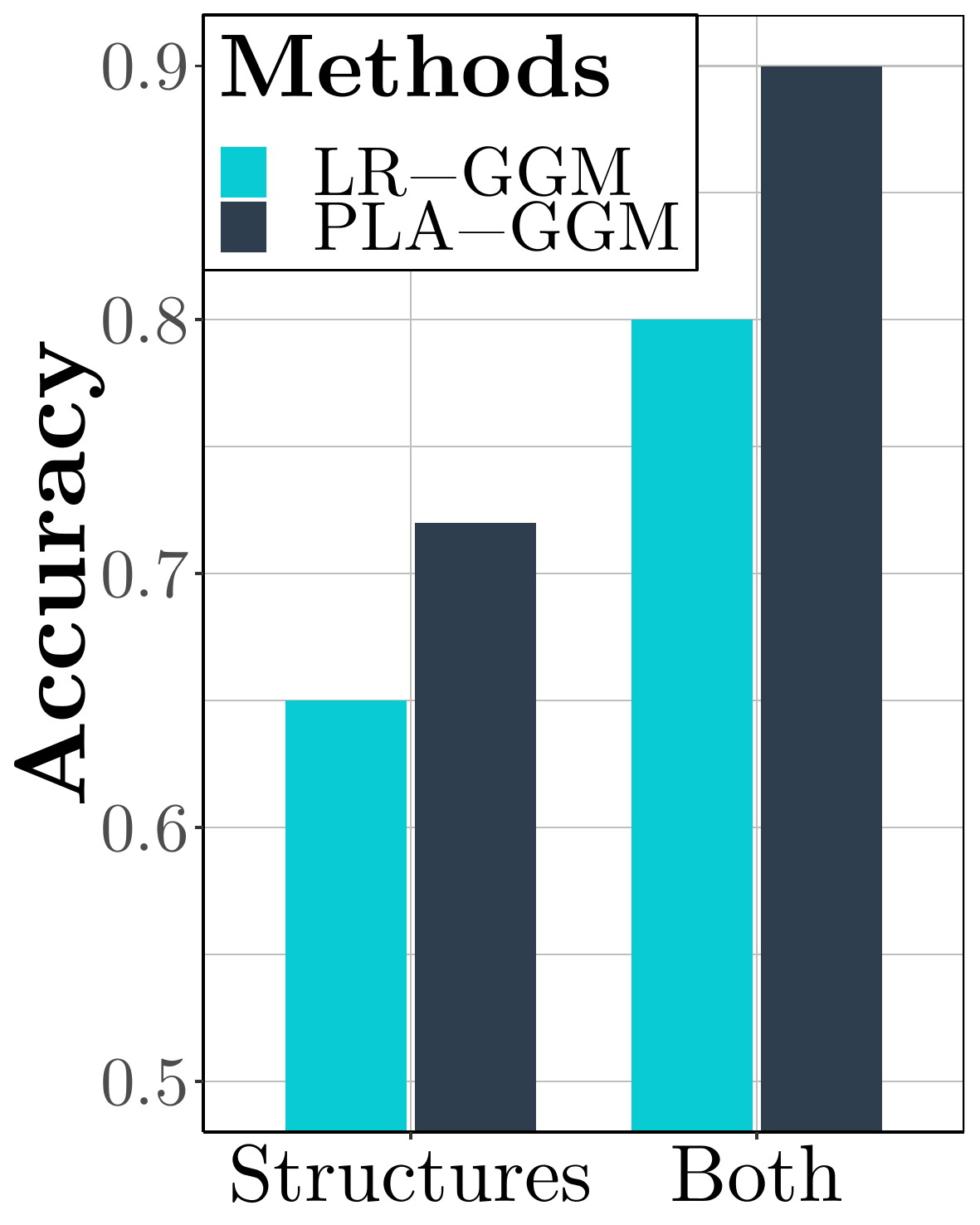}
		\caption{$k = 144$}
	\end{subfigure}
	\centering
	\begin{subfigure}{0.33\textwidth}
		\centering
		\includegraphics[scale=0.4]{./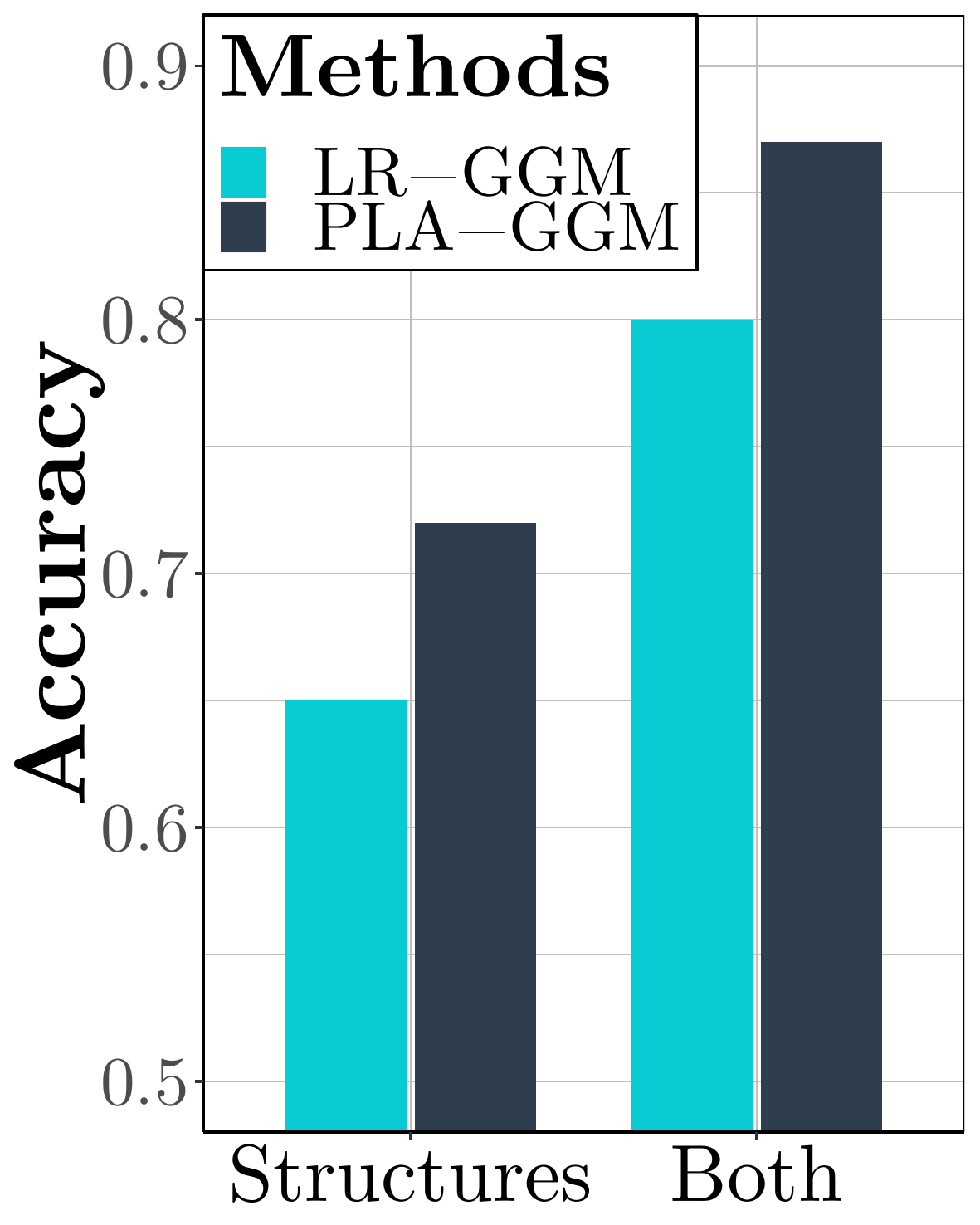}
		\caption{$k = 150$}
	\end{subfigure}
	\centering
	\caption{Diagnosis using different $\mathds{1}_{\left\{ \abs{g}\geq g^* \right\}}$'s. }
	\label{fig:1-function}
\end{figure*}

\end{document}